\newcommand*\samethanks[1][\value{footnote}]{\footnotemark[#1]}
\newcommand{\cmark}{\ding{51}}%
\DeclarePairedDelimiter\abs{\lvert}{\rvert}%
\newcommand{\argmin}{\mathop{\mathrm{argmin}}}
\newcommand{\unde}[1]{\underline{#1}}
\newcommand{\eg}{\textit{e.g.}}
\newcommand{\etal}{\textit{et al.}~}
\newcommand{\ie}{\textit{i.e.}~}
\begin{document}
\pagestyle{headings}
\mainmatter
\def\ECCVSubNumber{869}  

\title{Gabor Layers Enhance Network Robustness}

\titlerunning{Gabor Layers Enhance Network Robustness}
%
 \author{Juan C. Pérez \thanks{denotes equal contribution}\inst{1} \and
 Motasem Alfarra \samethanks\inst{2} \and
 Guillaume Jeanneret \samethanks\inst{1} \and
 Adel Bibi\inst{2} \and
 Ali Thabet\inst{2} \and
 Bernard Ghanem\inst{2} \and
 Pablo Arbeláez\inst{1}}
\authorrunning{J.C. Pérez et al.}
%
\institute{
Universidad de los Andes, Colombia \and 
King Abdullah University of Science and Technology (KAUST), Saudi Arabia
}
\maketitle

\begin{abstract}
We revisit the benefits of merging classical vision concepts with deep learning models. In particular, we explore the effect on robustness against adversarial attacks of replacing the first layers of various deep architectures with Gabor layers, \ie convolutional layers with filters that are based on learnable Gabor parameters. We observe that architectures enhanced with Gabor layers gain a consistent boost in robustness over regular models and preserve high generalizing test performance, even though these layers come at a negligible increase in the number of parameters. We then exploit the closed form expression of Gabor filters to derive an expression for a Lipschitz constant of such filters, and harness this theoretical result to develop a regularizer we use during training to further enhance network robustness. We conduct extensive experiments with various architectures (LeNet, AlexNet, VGG16 and WideResNet) on several datasets (MNIST, SVHN, CIFAR10 and CIFAR100) and demonstrate large empirical robustness gains. Furthermore, we experimentally show how our regularizer provides consistent robustness improvements.
   
\textbf{Keywords:} Gabor, Robustness, Adversarial Attacks, Regularizer.
   
\end{abstract}

\section{Introduction}
Deep neural networks (DNNs) have provided outstanding gains in performance in several fields, from computer vision \cite{alexnet,resnet_he} to machine learning \cite{deepleaning} and natural language processing \cite{speech}. However, despite this success, powerful DNNs are still highly susceptible to small perturbations in their input, known as adversarial attacks \cite{goodfellow2014explaining}. Their accuracy on standard benchmarks can be drastically reduced in the presence of perturbations that are imperceptible to the human eye. Furthermore, the construction of such perturbations is rather undemanding and, in some cases, as simple as performing a single gradient ascent step of a loss function with respect to the image \cite{goodfellow2014explaining}. 

\begin{figure}[t]
    \centering
    \begin{subfigure}{.46\textwidth}
        \includegraphics[width=\textwidth]{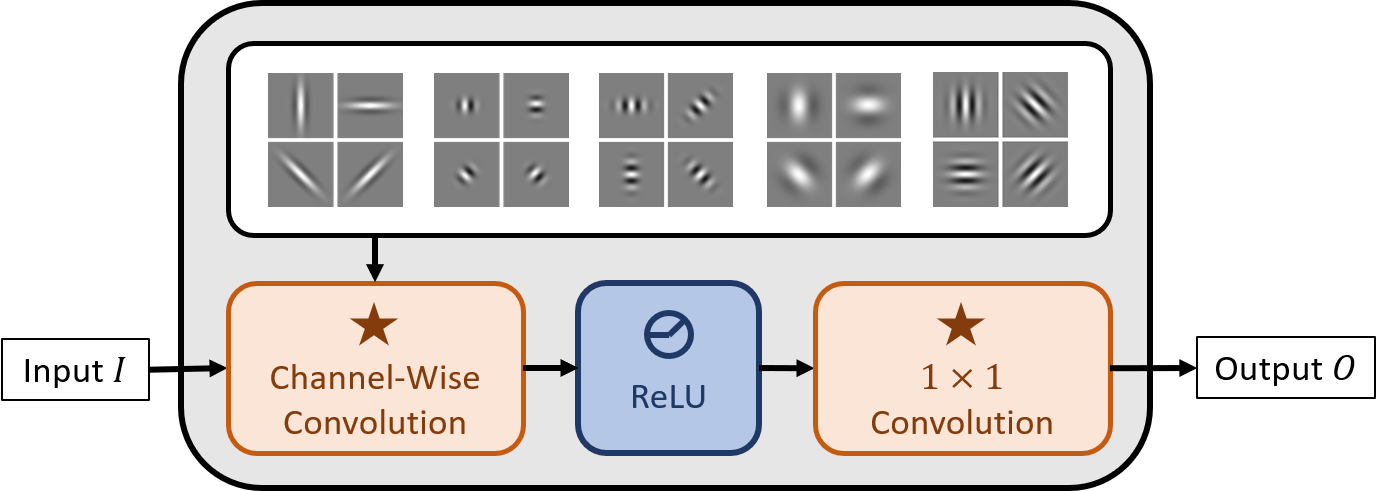}
        \caption{}
        \label{subfig:a}
    \end{subfigure}
    \begin{subfigure}{.53\textwidth}
        \includegraphics[width=\textwidth]{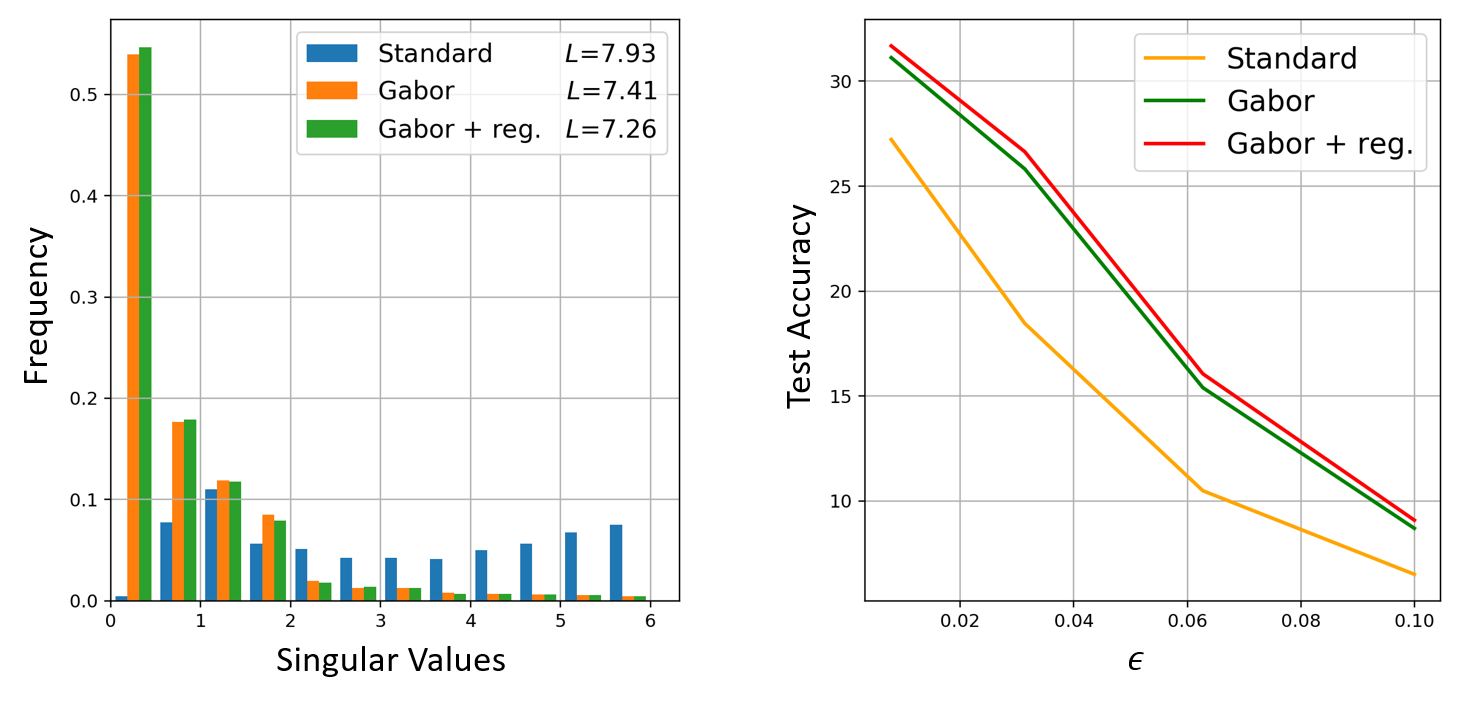}
        \caption{}
        \label{subfig:b}
    \end{subfigure}
    \caption{\textbf{Gabor layers and their effect on network robustness.} (a): Gabor layers convolve each channel of the input with a set of learned Gabor filters. As low-level filters, Gabor filters offer a natural approach to represent local signals. (b): Replacing standard convolutional layers with Gabor layers imposes structure in the distribution of singular values of the filters, reduces the Lipschitz constant of the filters (shown as $L$ in the legend of the left plot), and improves accuracy under adversarial attacks (right figure). The results shown here are for VGG16 on CIFAR100.}
    \label{fig:pull_image}
    \vspace{-0.65cm}
\end{figure}

The brittleness of DNNs in the presence of adversarial attacks has spurred much interest in the machine learning community, as evidenced by the emerging corpus of recent methods that focus on designing adversarial attacks \cite{goodfellow2014explaining,carlini_cw,Moosavi_Dezfooli_2016,structured_attack}. This phenomenon is far-reaching and widespread, and is of particular importance in real-world scenarios, \eg, autonomous cars \cite{Cao_2019,Chernikova_2019} and devices for the visually impaired \cite{visimpaired_DL}. The risks that this degenerate behavior poses underscore the need for models that are not only accurate, but also robust to adversarial attacks. 

Despite the complications that adversarial examples raise in modern computer vision, such inconveniences were not a major concern in the pre-DNN era. Many classical computer vision methods drew inspiration from the animal visual system, and so were designed to extract and use features that were meaningful to humans \cite{sift1_lowe,sift2_lowe,colorhistogram,Leung01ijcv}. As such, these methods were structured, generally comprehensible and, hence, better understood when compared to DNNs. In many cases, these methods even exhibited rigorous stability properties under robustness analysis \cite{reconstruction}. However, mainly due to large performance gaps on several tasks, classical methods were overshadowed by DNNs. It is precisely in the frontier between classical computer vision and DNNs that a stream of works arose to combine tools and insights from both worlds to improve performance. For instance, the works of \cite{admmnet,IstaNet} showed that introducing structured layers inspired by the classical compressed sensing literature can outperform pure learning-based DNNs. Moreover, Bai \etal \cite{bai2017deep_watershed} achieved large gains in performance in instance segmentation by introducing intuitions from the classical watershed transform into DNNs.

In this paper, searching for robustness in computer vision, we draw inspiration from biological vision, as the survival of species strongly depends on both the accuracy and robustness of the animal visual system. We note that Marr's and Julesz' work \cite{Marr:1982:VCI:1095712,Julesz1981} argues that the visual cortex initially processes low-level agnostic information, in which the input of the system is segmented according to blobs, edges, bars, curves and boundaries. Furthermore, Hubel and Wiesel \cite{hubel&wiesel} demonstrated that individual cells on the primary visual cortex respond to wave textures with different angles in an animal model, providing evidence that supports Marr's theory. Since Gabor filters \cite{gabor1946theory} are based on mathematical functions that are capable of modeling elements that resemble those that the animal visual cortices respond to, these filters became of customary use in the computer vision community, and have been used for texture characterization \cite{Julesz1981,Leung01ijcv}, character recognition \cite{stroke_novel}, edge detection \cite{202090}, and face recognition \cite{377961,1215407}. While several works examine their integration into DNNs \cite{Gaborfilter+fast-training,GaborCNN,gabornet}, none investigate the effect of introducing parameterized Gabor filters into DNNs on the robustness of these networks. Our work fills this gap in the literature, as we provide experimental results demonstrating the significant impact that such an architectural change has on improving robustness. Figure \ref{fig:pull_image} shows an overview of our work and results.

\textbf{Contributions}: Our main contributions are two-fold: \textbf{(1)} We propose a \textit{parameterized} Gabor-structured convolutional layer as a replacement for early convolutional layers in DNNs. We observe that such layers can have a remarkable impact on robustness. Thereafter, we analyze and derive an analytical expression for a Lipschitz constant of the Gabor filters, and propose a new training regularizer to further boost robustness. \textbf{(2)} We empirically validate our claims with a large number of experiments on different architectures (LeNet \cite{lecun_lenet}, AlexNet \cite{alexnet}, VGG16 \cite{vgg_simonyan} and Wide-ResNet \cite{wideresnet_zagoruyko}) and over several datasets (MNIST \cite{lecun_mnist}, SVHN \cite{SVHN}, CIFAR10 and CIFAR100 \cite{Cifars}). We show that introducing our proposed Gabor layers in DNNs induces a consistent boost in robustness at negligible cost, while preserving high generalizing test performance. In addition, we experimentally show that our novel regularizer based on the Lipschitz constant we derive can further improve adversarial robustness. For instance, we improve adversarial robustness on certain networks by almost \textbf{$18\%$} with $\ell_\infty$ bounded noise of $\nicefrac{8}{255}$. Lastly, we show empirically that combining this architectural change with adversarial training \cite{madry_defense,freeadv} can further improve robustness. 

\section{Related Work}
 
\textbf{Integrating Gabor Filters with DNNs.} 
Several works attempted to combine Gabor filters and DNNs. For instance, the work of \cite{Gaborfilter+fast-training} showed that replacing the first convolutional layers in DNNs with Gabor filters speeds up the training procedure, while \cite{GaborCNN} demonstrated that introducing Gabor layers reduces the parameter count without hurting generalization accuracy. Regarding large scale datasets, Alekseev and Bobe \cite{gabornet} showed that the standard classification accuracy of AlexNet \cite{alexnet} on ImageNet \cite{russakovsky2015imagenet} can be attained even when the first convolutional filters are replaced with Gabor filters. Moreover, other works have integrated Gabor filters with DNNs for various applications, \eg, pedestrian detection \cite{gabor_pedestrian}, object recognition \cite{gabor_feat_nat_scene}, hyper-spectral image classification \cite{hyperspect_gaborcnn}, and Chinese optical character recognition \cite{gabor_chinese_char}. Likewise, in this work, we study the effects of introducing Gabor filters into various DNNs by means of a \textit{Gabor layer}, a convolution-based layer we propose in which the convolutional filters are constructed by a parameterized Gabor function with learnable parameters. Furthermore, and based on the well-defined spatial structure of these filters, we study the effect of these layers on robustness, and find encouraging results.

\textbf{Robust Neural Networks.} Recent work demonstrated that DNNs are vulnerable to adversarial perturbations. This susceptibility incited a stream of research that aimed to develop not only accurate but also robust DNNs. A straightforward approach to this nuisance is the direct augmentation of data corrupted with adversarial examples in the training set \cite{goodfellow2014explaining}. However, the performance of this approach can be computationally limited, since the amount of augmentation needed for a high dimensional input space is computationally prohibitive. Moreover, Papernot \etal \cite{defensive} showed that distilling DNNs into smaller networks can improve robustness. Another approach to robustness is through the functional perspective lens. For instance, Parseval Networks \cite{parseval_nets} showed that robustness can be achieved by regularizing the Lipschitz constant of each layer in a DNN to be smaller than $1$. In this work, along the lines of Parseval Networks \cite{parseval_nets}, and since Gabor filters can be generated by sampling from a continuous Gabor function, we derive an analytical closed form expression for the Lipschitz constant of the filters of the proposed Gabor layer. This derivation allows us to propose well-motivated regularizers that can encourage Lipschitz constant minimization, and then harness such regularizers to improve the robustness of networks with Gabor layers.

\textbf{Adversarial Training.} An orthogonal direction for obtaining robust models is through optimization of a saddle point problem, in which an adversary, whose aim is to maximize the objective, is introduced into the traditional optimization objective. In other words, instead of the typical training scheme, one can minimize the worst adversarial loss over all bounded energy (often measured in $\ell_\infty$ norm) perturbations around every given input in the training data. This approach is one of the most celebrated for training robust networks, and is now popularly known as adversarial training \cite{madry_defense}. However, this training comes at an inconvenient computational cost. To this regard, several works \cite{wong2020fastadv,freeadv,NIPS2019_YOPO} proposed faster and computationally-cheaper versions of adversarial training capable of achieving similar robustness levels. In this work, we use \say{free} adversarial training \cite{freeadv} in our experiments to further study Gabor layers and adversarial training as orthogonal approaches to achieve robustness. Our results show how adversarial training and Gabor layers interact positively and can, hence, be jointly used for enhancing network robustness.

\section{Methodology}
As demonstrated by Hubel and Wiesel \cite{hubel&wiesel}, the first layers of visual processing in the animal brain are responsible for detecting low-level visual information. Since Gabor filters have the capacity to capture low-level representations, and inspired by the robust properties of the animal visual system, we hypothesize that Gabor filters possess inherent robustness properties that are transferable to other systems, perhaps even DNNs. In this section, we discuss our proposed Gabor layer and its implementation. Then, we derive a Lipschitz constant to the Gabor filter, and design a regularizer with aims at controlling the robustness properties of the layer by controlling the Lipschitz constant.

\subsection{Convolutional Gabor Filter as a Layer}

We start by introducing the Gabor functions, defined as follows:

\begin{equation}\label{eq:gabor_def}
\begin{aligned}
&G_{\theta}(x', y'; \sigma, \gamma, \lambda, \psi) :=  e^{-\sigma^2 \: (x'^2 + \gamma^2 y'^2)} \: \cos(\lambda x' + \psi)    \\
& \quad \quad x' = x\cos\theta - y\sin\theta \quad y' = x\sin\theta + y\cos\theta.
\end{aligned}
\end{equation}

To construct a discrete Gabor filter, we discretize $x$ and $y$ in (\ref{eq:gabor_def}) uniformly on a grid, where the number of grid samples determines the filter size. Given a fixed set of parameters $\{\sigma,\gamma,\lambda,\psi\}$, a grid $\{(x_i,y_i)\}_{i=1}^{k^2}$ of size $k \times k$, a rotation angle $\theta_j$ and a filter scale $\alpha_j$, computing Equation~(\ref{eq:gabor_def}) with a scale over the grid yields a single surface $ \alpha_j G_{\theta_j}(x',y';\sigma,\gamma,\lambda,\psi) \in \mathbb{R}^{1 \times k \times k}$, that we interpret as a filter for a convolutional layer. The learnable parameters~\cite{backprop} for such a function are given by the set $\mathcal{P} = \{\alpha_j,\sigma,\gamma,\lambda,\psi;\forall j=1,\dots,r\}$, where the rotations $\theta_j$ are restricted to be $r$ angles uniformly sampled from the interval $[0, \, 2\pi]$. Evaluating these functions results in $r$ rotated filters, each with a scale $\alpha_j$ defined by the set $\mathcal{F}_{\mathcal{P}} = \{\alpha_j G_{\theta_j}\}_{j=1}^r$. In this work, we consider several sets of learnable parameters $\mathcal{P}$, say $p$ of them, thus, the set of all Gabor filters (totaling to $rp$ filters) is given by the set $\mathcal{K} = \{\mathcal{F}_{\mathcal{P}_i}\}_{i=1}^p$. Refer to Figure~\ref{fig:gabor_operation} for a graphical guide on the construction of $\mathcal{K}$.

\begin{figure*}[t!]
    \centering
    \includegraphics[width=\textwidth]{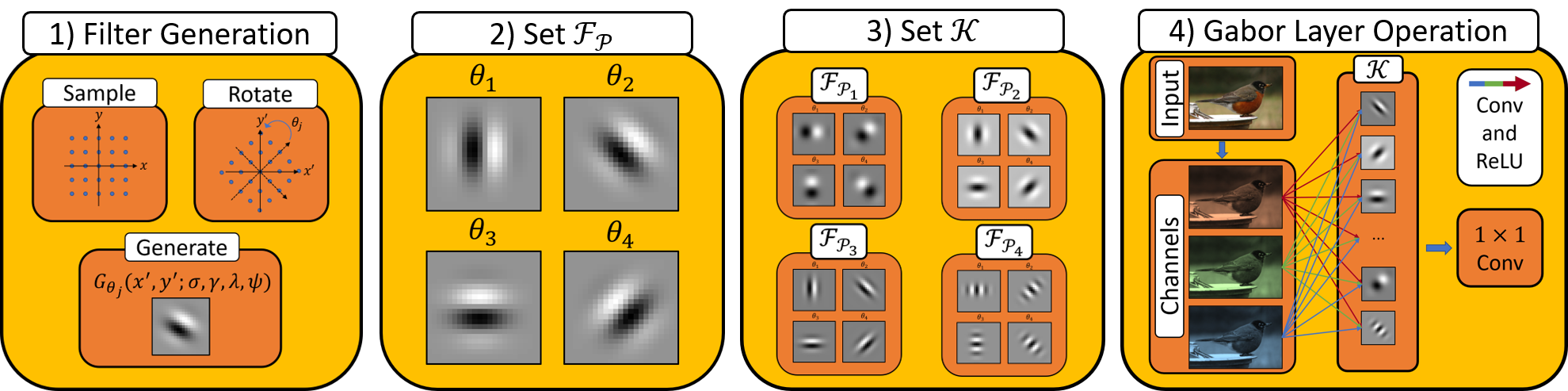}
    \caption{\textbf{Gabor layer operations.} \textit{(1)} We generate filters by rotating a sampled grid over multiple orientations and then evaluating the Gabor function according to set of parameters $\mathcal{P}$, to yield the set of filters $\mathcal{F}_\mathcal{P}$ \textit{(2)}. Then, we construct the total set of filters $\mathcal{K}$ \textit{(3)} by joining multiple sets $\mathcal{F}_{\mathcal{P}_i}$. Finally, the Gabor Layer operation \textit{(4)} separately convolves every filter in $\mathcal{K}$ with every channel from the input, applies ReLU non-linearity, and then applies a $1\times1$ convolution to the output features to get the desired number of output channels.}
    \label{fig:gabor_operation}
\end{figure*}

\subsection{Implementation of the Gabor Layer}

Given an input tensor $I$ with $m$ channels, $I \in \mathbb{R}^{m \times h \times w}$, the Gabor layers follow a depth-wise separable convolution-based \cite{Chollet_2017} approach to convolve the Gabor filters in $\mathcal{K}$ with $I$.
The tensor $I$ is first separated into $m$ individual channels. Next, each channel is convolved with each filter in $\mathcal{K}$, and then a ReLU activation is applied to the output. Formally, the Gabor layer with filters in the set $\mathcal{K}$ operating on some input tensor $I$ is presented as $\mathcal{R} = \{\text{ReLU}(I_i \star f_j), \: I_i \in \mathcal{I}, \: f_j \in \mathcal{K},~\forall i,j\}$ where $I_i = I(i,:,:) \in \mathbb{R}^{1 \times h \times w}$ and  $\star$ denotes the convolution operation. This operation produces $|\mathcal{R}| = mrp$ responses. Finally, the responses are stacked and convolved with a $1 \times 1$ filter with $n$ filters. Thus, the final response is of size $n\times h^\prime \times w^\prime$. Figure~\ref{fig:gabor_operation} shows an overview of this operation.

\subsection{Regularization}
A function $f : \mathbb{R}^n \rightarrow \mathbb{R}$ is $L$-Lipschitz if $\|f(x) - f(y)\| \leq L \|x - y\| $, $ \forall x,y \in \mathbb{R}^n$, where $L$ is the Lipschitz constant. Studying the Lipschitz constant of a DNN is essential for exploring the its robustness properties, since DNNs with a small Lipschitz constant enjoy a better-behaving backpropagated signal of gradients, improved computational stability~\cite{sedghi2018the}, and an enhanced robustness to adversarial attacks~\cite{parseval_nets}. 

Therefore, to train networks that are robust, Cisse \etal \cite{parseval_nets} proposed a regularizer that encourages the weights of the networks to be tight frames, which are extensions of orthogonal matrices to non-square matrices. The training procedure is, however, nontrivial to implement. Following the intuition of \cite{parseval_nets}, we study the continuity properties of the Gabor layer and derive an expression for a Lipschitz constant as a function of the parameters of the filters, $\mathcal{P}$. This expression allows for direct network regularization on the parameters of the Gabor layer corresponding to the fastest decrease in the Lipschitz constant of the layer. To this end, we present our main theoretical result, which allows us to apply regularization on the Lipschitz constant of the filters of a Gabor layer. 

\begin{theorem}\label{theorem:gabor_lipsch}
Given a Gabor filter $G_{\theta}(m, n; \sigma, \gamma, \lambda, \psi)$, a Lipschitz constant $L$ of the convolutional layer that has $G_\theta$ as its filter, with circular boundary conditions for the convolution, is given by:

\begin{equation*}
    L=\left(1 + \abs{X^\prime}e^{-\sigma^2 m_{*}^2}\right)\left(1 + \abs{Y^\prime}e^{-\sigma^2 \gamma^2n_{*}^2}\right),
\end{equation*}

\noindent where $X^\prime = X \setminus \{0\}$, $Y^\prime = Y \setminus \{0\}$, $X = \{x_i\}_{i=1}^{k^2}$ and $Y = \{y_i\}_{i=1}^{k^2}$ are sets of sampled values of the rotated $(x',y')$ grid where $\{0\} \in X,Y$, $m_{*} = \argmin_{x \in X^\prime} \abs{x}$ and $n_{*} = \argmin_{y \in Y^\prime} \abs{y}$.
\end{theorem}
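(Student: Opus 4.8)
The plan is to reduce the layer to a linear map and control its operator norm by the $\ell_1$ norm of the filter. A convolution with circular boundary conditions is a linear operator on $\mathbb{R}^{h\times w}$, so a valid Lipschitz constant of the corresponding layer is its operator ($\ell_2\to\ell_2$) norm, i.e.\ its largest singular value. I would bound this by $\|G_\theta\|_1 := \sum_{m,n}|G_\theta(m,n;\sigma,\gamma,\lambda,\psi)|$, the sum of absolute values of the filter entries, in either of two standard ways: (i) the matrix of a circular convolution is block-circulant-with-circulant-blocks, hence diagonalized by the $2$D DFT and normal, so its singular values are the moduli of the DFT coefficients of $G_\theta$, each of which is $\le \|G_\theta\|_1$ by the triangle inequality; or (ii) directly, via Young's inequality for circular convolution, $\|G_\theta \star u\|_2 \le \|G_\theta\|_1\,\|u\|_2$. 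Whether the layer uses convolution or cross-correlation is irrelevant, since flipping the filter changes neither $\|G_\theta\|_1$ nor the operator norm; likewise, if the filter is applied depth-wise, the operator is a direct sum of identical blocks with unchanged norm. Thus $L = \|G_\theta\|_1$ is already a valid choice, and it remains only to upper bound $\|G_\theta\|_1$ by the stated product.

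Next I would use the closed form~(\ref{eq:gabor_def}). Since $|\cos(\lambda x' + \psi)| \le 1$, each entry obeys $|G_\theta(x',y')| \le e^{-\sigma^2 x'^2}\,e^{-\sigma^2\gamma^2 y'^2}$, where $(x',y')$ is the corresponding rotated grid point. Summing over the $k^2$ grid cells and noting that distinct cells map to distinct rotated points (rotation being a bijection of a set of distinct grid points) whose coordinates lie, by definition of $X$ and $Y$, in $X$ and $Y$ respectively, the set of rotated grid points is a subset of the Cartesian product $X\times Y$; as every summand is nonnegative, enlarging the index set to $X\times Y$ only increases the sum, and the resulting sum factors: $\|G_\theta\|_1 \le \big(\sum_{x\in X} e^{-\sigma^2 x^2}\big)\big(\sum_{y\in Y} e^{-\sigma^2\gamma^2 y^2}\big)$. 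Finally, in $\sum_{x\in X} e^{-\sigma^2 x^2}$ I would split off the term $x=0$ (which equals $1$, since $0\in X$), leaving $|X'|$ terms with $x\in X'=X\setminus\{0\}$; as $t\mapsto e^{-\sigma^2 t^2}$ decreases in $|t|$, each such term is at most $e^{-\sigma^2 m_*^2}$ with $m_*=\argmin_{x\in X'}|x|$, giving $1 + |X'|e^{-\sigma^2 m_*^2}$. The symmetric bound $1 + |Y'|e^{-\sigma^2\gamma^2 n_*^2}$ holds for the $Y$-sum, and multiplying the two yields the claimed expression for $L$.

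The step needing the most care is the first one: making precise that, under circular boundary conditions, the layer's Lipschitz constant coincides with the operator norm of a block-circulant matrix and that this is dominated by $\|G_\theta\|_1$ — this is where the circular-boundary hypothesis is genuinely invoked (for non-periodic padding the matrix is only Toeplitz-like and the DFT diagonalization fails, although the $\|\cdot\|_1$ bound still survives via Young's inequality). Everything afterward is elementary: the bound $|\cos|\le 1$, the subset/Cartesian-product comparison of nonnegative sums, separability of the Gaussian envelope over $X\times Y$, and the monotonicity of $t\mapsto e^{-\sigma^2 t^2}$.
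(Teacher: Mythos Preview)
Your proposal is correct and follows essentially the same route as the paper: bound the operator norm of the circular convolution by the maximum DFT magnitude, then by the $\ell_1$ norm of the filter via the triangle inequality, use $|\cos|\le 1$ to reduce to a separable Gaussian envelope, and finally split off the $0$ term in each factor and dominate the remaining $|X'|$ (resp.\ $|Y'|$) terms by $e^{-\sigma^2 m_*^2}$ (resp.\ $e^{-\sigma^2\gamma^2 n_*^2}$). Your treatment is in fact slightly more careful than the paper's at the factorization step---you make explicit that the rotated grid points sit inside $X\times Y$ before separating the sum, whereas the paper writes the DFT directly as a product over $X$ and $Y$---and your alternative via Young's inequality is a nice touch the paper does not mention.
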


\begin{proof}

To compute the Lipschitz constant of a convolutional layer, one must compute the largest singular value of the underlying convolutional matrix of the filter. For separable convolutions, this computation is equivalent to the maximum magnitude of the 2D Discrete Fourier Transform (DFT) of the Gabor filter $G_{\theta}$~\cite{sedghi2018the,bibi2018deep}. Thus, the Lipschitz constant of the convolutional layer is given by $L = \max_{u,v} \abs{\text{DFT} \left(G_{\theta}(m,n;\sigma,\gamma,\lambda,\psi)\right)}$, where $\text{DFT}$ is the 2D DFT over the coordinates $m$ and $n$ in the spatial domain, $u$ and $v$ are the coordinates in the frequency domain, and $|\cdot|$ is the magnitude operator. Note that $G_\theta$ can be expressed as a product of two functions that are independent of the sampling sets $X$ and $Y$ as follows:
\begin{equation}
    \begin{aligned}
    G_{\theta}(m,n; \sigma, \gamma, \lambda, \psi) :=  \underbrace{e^{-\sigma^2 m^2}\cos(\lambda m + \psi)}_{f(m;\sigma,\lambda,\psi)} \underbrace{e^{-\sigma^2 \gamma^2 n^2}}_{g(n;\sigma,\gamma)}. \nonumber
    \end{aligned}
\end{equation}

Thus, we have
\begin{align*}
    L & = \max_{u,v} \big|\text{DFT} \left(G_{\theta}(m,n;\sigma,\gamma,\lambda,\psi)\right)\big| \\
      & = \max_{u,v} \big|\sum_{m \in X}e^{-\omega_m um}f(m;\sigma,\lambda,\psi)\sum_{n \in Y}e^{-\omega_n vn}g(n;\sigma,\gamma)\big| \\
      & \leq \max_{u,v} \sum_{m \in X} \big|f(m;\sigma,\lambda,\psi)\big|\sum_{n \in Y}\big|g(n;\sigma,\gamma)\big|.
\end{align*}

Note that $\omega_m = \frac{j2\pi}{\abs{X}}$, $\omega_n = \frac{j2\pi}{\abs{Y}}$ and $j^2 = -1$. The last inequality follows from Cauchy–Schwarz and the fact that $|e^{-\omega_m um}| = |e^{-\omega_n vn}| = 1$. Note that since $\abs{g(n;\sigma,\gamma)} = g(n;\sigma,\gamma)$, and $\abs{f(m;\sigma,\lambda,\psi)} \leq e^{-\sigma^2 m^2}$ we have that:
\begin{align*}
    L &\leq \sum_{m \in X}e^{-\sigma^2 m^2}\sum_{n \in Y}e^{-\sigma^2 \gamma^2 n^2}
    \leq  \left(1 + \abs{X'}e^{-\sigma^2m_{*}^2}\right)\left(1 +\abs{Y'}e^{-\sigma^2 \gamma^2n_{*}^2}\right).
\end{align*}

The last inequality follows by construction, since we have $\{0\} \in X,Y$, $i.e.$, the choice of uniform grid contains the $0$ element in both $X$ and $Y$, regardless of the orientation $\theta$, where we define $m_{*} = \argmin_{x \in X^\prime} \abs{x}$, and $n_{*} = \argmin_{y \in Y^\prime} \abs{y}$. 
\begin{flushright}
$\square$
\end{flushright}
\end{proof}

\subsection{Lipschitz Constant Regularization}

Theorem~\ref{theorem:gabor_lipsch} provides an explicit expression for a Lipschitz constant of the Gabor filter as a function of its parameters. Note that the expression we derived decreases exponentially fast with $\sigma$. In particular, we note that, as $\sigma$ increases, $G_{\theta}$ converges to a scaled Dirac-like surface. Hence, this Lipschitz constant is minimized when the filter resembles a Dirac-delta. Therefore, to train DNNs with improved robustness, one can minimize the Lipschitz constant we find in Theorem \ref{theorem:gabor_lipsch}. Note that the Lipschitz constant of the network can be upper bounded by the product of the Lipschitz constants of individual layers. Thus, decreasing the Lipschitz constant we provide in Theorem \ref{theorem:gabor_lipsch} can aid in decreasing the overall Lipschitz constant of a DNN, and thereafter enhance the network's robustness. To this end, we propose the following regularized loss:
\begin{equation}
\label{eq:modified_loss1}
    \mathcal{L} = \mathcal{L}_{\text{ce}} - \beta \: \sum_i \sigma_i^2,
\end{equation}
where $\mathcal{L}_{\text{ce}}$ is the typical cross-entropy loss and $\beta > 0$ is a trade-off parameter. The loss in Equation (\ref{eq:modified_loss1}) can lead to
unbounded solutions for $\sigma_i$. To alleviate this behavior, we also propose the following loss:
\begin{equation}
\label{eq:modified_loss2}
    \mathcal{L} = \mathcal{L}_{\text{ce}} - \beta \sum_i \left(\mu\: \tanh{\sigma_i}\right)^2,
\end{equation}
where $\mu$ is a scaling constant for $\tanh{\sigma}$. 
In the following section, we present experiments showing the effect of our Gabor layers on network robustness. Specifically, we show the gains obtained from the architectural modification of introducing Gabor layers, the introduction of our proposed regularizer, and the addition of adversarial training to the overall pipeline.

\section{Experiments}
To demonstrate the benefits and impact on robustness of integrating our Gabor layer to DNNs, we conduct extensive experiments with LeNet \cite{lecun_lenet}, AlexNet \cite{alexnet}, VGG16 \cite{vgg_simonyan}, and Wide-ResNet \cite{wideresnet_zagoruyko} on the MNIST \cite{lecun_mnist}, CIFAR10, CIFAR100 \cite{Cifars} and SVHN \cite{SVHN} datasets. In each of the aforementioned networks, we replace up to the first three convolutional layers with Gabor layers, and measure the impact of the Gabor layers in terms of accuracy, robustness, and  the distribution of singular values of the layers. Moreover, we perform experiments demonstrating that the robustness of the Gabor-layered networks can be enhanced furthermore by using the regularizer we propose in Equations~(\ref{eq:modified_loss1}) and (\ref{eq:modified_loss2}), and even when jointly employing regularization and adversarial training \cite{freeadv}.

\subsection{Implementation Details}\label{subsec:impl_details}
We train all networks with stochastic gradient descent with weight decay of $5\times10^{-4}$, momentum of $0.9$, and batch size of $128$. For MNIST, we train the networks for $90$ epochs with a starting learning rate of $10^{-2}$, which is multiplied by a factor of $10^{-1}$ at epochs $30$ and $60$. For SVHN, we train models for $160$ epochs with a starting learning rate of $10^{-2}$ that is multiplied by a factor of $10^{-1}$ at epochs $80$ and $120$. For CIFAR10 and CIFAR100, we train the networks for $300$ epochs with a starting learning rate of $10^{-2}$ that is multiplied by a factor of $10^{-1}$ every $100$ epochs.

\subsection{Robustness Assessment} 

Following common practice in the literature for empirically evaluating robustness properties \cite{madry_defense,freeadv}, we assess the robustness of a DNN by measuring its prediction accuracy when the input is probed with adversarial attacks, which is widely referred to in the literature as \say{adversarial accuracy}. We also measure the \say{flip rate}, which is defined as the percentage of instances of the test set for which the predictions of the network changed when under adversarial attacks.

Formally, if $x \in \mathbb{R}^d$ is some input to a classifier $C:\mathbb{R}^d \rightarrow \mathbb{R}^k$, $C(x)$ is the prediction of $C$ at input $x$. Then, $x^{\text{adv}} = x + \eta$ is an adversarial example if the prediction of the classifier has changed, \ie $C(x^{\text{adv}}) \neq C(x)$. Both $\eta$ and $x^{\text{adv}}$ must adhere to constraints, namely: \textit{(i)} the $\ell_p$-norm of $\eta$ must be bounded by some $\epsilon$, $i.e.$, $\|\eta\|_p \leq \epsilon$, and \textit{(ii)} $x^{\text{adv}}$ must lie in the space of valid instances $X$, \textit{i.e.}, $x^{\text{adv}}\in [0, 1]^d$. A standard approach to constructing $x^{\text{adv}}$ for some input $x$ is by running Projected Gradient Descent (PGD)~\cite{madry_defense} with $x$ as an initialization for several iterations. For some loss function $\mathcal{L}$, a PGD iteration projects a step of the Fast Gradient Sign Method~\cite{goodfellow2014explaining} onto the valid set $\mathcal{S}$ which is defined by the constraints on $\eta$ and $x^{\text{adv}}$. Formally, one iteration of PGD attack is:
$$ 
x^{k+1} = \prod_{\mathcal{S}}\left(x^k + \delta \: \text{sign}\left(\nabla_{x^{k}}\mathcal{L}(x^k,y)\right)\right),
$$
where $\prod_{\mathcal{S}}$ is the projection operator onto $\mathcal{S}$ and $y$ is the label. In our experiments, we consider attacks where $\eta$ is $\epsilon$-$\ell_{\infty}$ bounded. For each image, we run PGD for $200$ iterations and perform $10$ random restarts inside the $\epsilon$-$\ell_{\infty}$ ball centered in the image. Following prior art \cite{wong2020fastadv}, we set $\epsilon \in \{0.1, 0.2, 0.3\}$ for MNIST and $\epsilon \in \{\nicefrac{2}{255}, \nicefrac{8}{255}, \nicefrac{16}{255}\}$ for all other datasets. Throughout our experiments, we assess robustness by measuring the adversarial accuracies and flip rates when under PGD attacks.

\subsection{Performance of Gabor-Layered Architectures}
\begin{table}[t]
\centering
\caption{\label{tab:recover_accuracy} \textbf{Test set accuracies on different datasets of various baselines, and their Gabor-layered versions.} Gabor-layered architectures can recover the accuracies of their standard counterparts while providing robustness. $\Delta$ is the absolute difference between the baselines and the Gabor-layered architectures.}
\vspace{0.3cm}
\centering
\begin{tabular}{c|c|ccc}
\toprule
\text{    } Dataset \text{    } & \text{  } Architecture \text{  }   & \text{  } Baseline \text{  }  & Gabor & \text{  } $\Delta$ \text{  } \\
\midrule
MNIST    & LeNet        & 99.36     & 99.03 & 0.33 \\\hline 
SVHN     & WideResNet   & 96.62     & 96.70 & 0.08 \\\hline
SVHN     & VGG16        & 96.52     & 96.18 & 0.34 \\\hline
CIFAR10  & VGG16        & 92.03     & 91.35 & 0.68 \\\hline
CIFAR100 & AlexNet      & 46.48     & 45.15 & 1.33 \\\hline
CIFAR100 & WideResNet   & 77.68     & 76.86 & 0.82 \\\hline
CIFAR100 & VGG16        & 67.54     & 64.49 & 3.05 \\\hline
\toprule
\end{tabular}
\end{table}

The Gabor function in Equation~(\ref{eq:gabor_def}) restricts the space of patterns attainable by the Gabor filters. However, this set of patterns is aligned with what is observed in practice in the early layers of many standard architectures \cite{atanov2018deep,alexnet}. This observation follows the intuition that DNNs learn hierarchical representations, with early layers detecting lines and blobs, and deeper layers learning semantic information \cite{lecun-hierarchical}. By experimenting with Gabor layers on various DNNs, we find that Gabor-layered DNNs recover close-to-standard, and sometimes better, test-set accuracies on several datasets. In Table~\ref{tab:recover_accuracy}, we report the test-set accuracies of several dataset-network pairs for standard DNNs and their Gabor-layered counterparts. We show the absolute difference in performance in the last column. 

Moreover, in Figure \ref{fig:behaviour_of_gabor}, we provide a visual comparison between the patterns learned by AlexNet in its original implementation \cite{alexnet} and those learned in the Gabor-layered version of AlexNet (trained on CIFAR100). We observe that filters in the Gabor layer converge to filters that are similar to those found in the original implementation of AlexNet, where we observe blob-like structures and oriented edges and bars of various sizes. Note that both sets of filters are, in turn, similar to filter banks traditionally used in computer vision, as those proposed by Leung and Malik \cite{Leung01ijcv}. Next, we show that the Gabor-layered networks highlighted in Table~\ref{tab:recover_accuracy} not only achieve test set accuracies as high as those of standard DNNs, but also enjoy better robustness properties for free.

\begin{figure}[t]
    \centering
    \begin{subfigure}{.47\textwidth}
        \includegraphics[width=\textwidth]{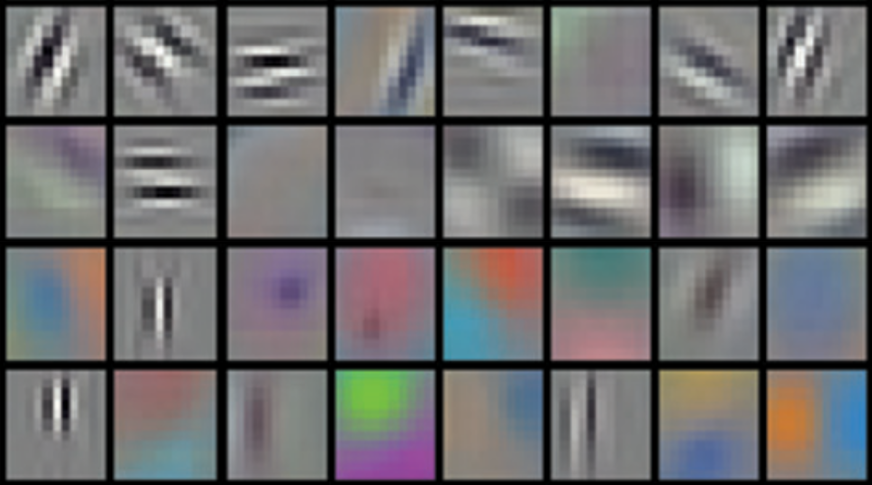}
        \caption{}
        \label{subfig:alexnet_filters}
    \end{subfigure}
    \begin{subfigure}{.47\textwidth}
        \includegraphics[width=\textwidth]{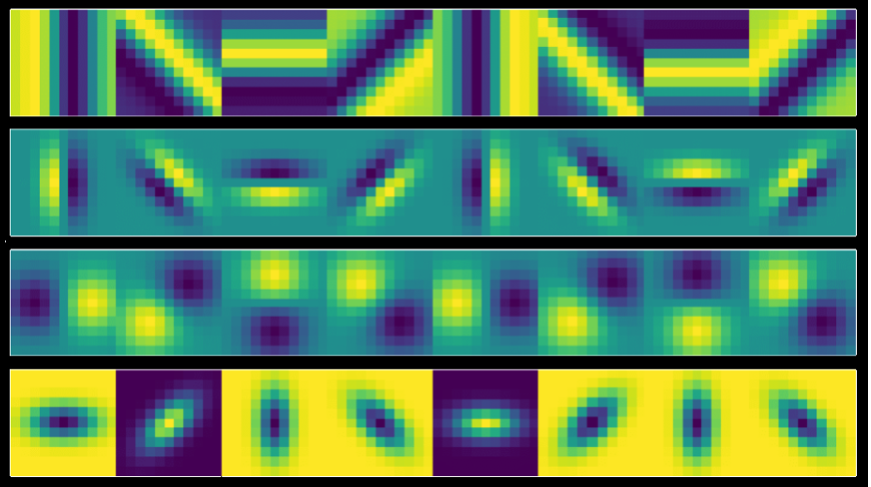}
        \caption{}
        \label{subfig:gabor_filters}
    \end{subfigure}
    \caption{\textbf{Comparison between filters learned by AlexNet and by Gabor-layered AlexNet.} (a) Various filters learned in the first convolutional layer of AlexNet in its original implementation \cite{alexnet}. (b) Several filters learned in the Gabor-layered version of AlexNet. Each column in (b) is a different orientation of the same filter, while each row represents a different set of parameters of the Gabor function. Note that standard convolutional layers use multiple-channeled filters, while Gabor layers use one-dimensional filters. Compellingly, both sets of filters present blobs and oriented edges and bars of various sizes and intensities. Best viewed in color.}
    \label{fig:behaviour_of_gabor}
\end{figure}

\subsection{Distribution of Singular Values}\label{subsec:sing_vals}
The Lipschitz constant of a network is an important quantity in the study of the network's robustness properties, since it is a measure of the variation of the network's predictions under input perturbations. Hence, in this work we study the distribution of singular values and, as a consequence, the Lipschitz constant of the filters of the layers in which we introduced Gabor layers instead of regular convolutional layers, in a similar fashion to \cite{parseval_nets}. In Figure~\ref{fig:singular_values_wb} we report box-plots of the distributions of singular values for the first layer of LeNet trained on MNIST, and the first three layers of VGG16 trained on CIFAR100. Each plot shows the distribution of singular values of the standard architectures (S), Gabor-layered architectures (G), and Gabor-layered architectures trained \textit{with} the regularizers (G+r) proposed in Equations (\ref{eq:modified_loss1}) and (\ref{eq:modified_loss2}).

Figure \ref{fig:singular_values_wb} demonstrates that the singular values of the filters in Gabor layers tend to be concentrated around smaller values, while also being distributed in smaller ranges than those of their standard convolutional counterparts, as shown by the interquartile ranges. Additionally, in most cases, the Lipschitz constant of the filters of Gabor layers, \ie the top notch of each box-plot, is smaller than that of standard convolutional layers.

Moreover, we find that training Gabor-layered networks with the regularizer we introduced in Equations~(\ref{eq:modified_loss1}) and (\ref{eq:modified_loss2}) incites further reduction in the singular values of the Gabor filters, as shown in Figure~\ref{fig:singular_values_wb}. For instance, the Gabor-layered version of LeNet trained on MNIST has a smaller interquartile range of the singular values, but still suffers from a large Lipschitz constant. However, by jointly introducing both the Gabor layer \textit{and} the regularizer, the Lipschitz constant decreases by a factor of almost 5. This reduction in the Lipschitz constant is consistent in all layers of VGG16 trained on CIFAR100.

In the following section, we show the bulk of our experiments, in which we investigate the effect on robustness of introducing Gabor layers into DNNs, by conducting a study across different architectures and datasets.

\begin{figure*}[t]
\centering
\includegraphics[width=0.7\textwidth]{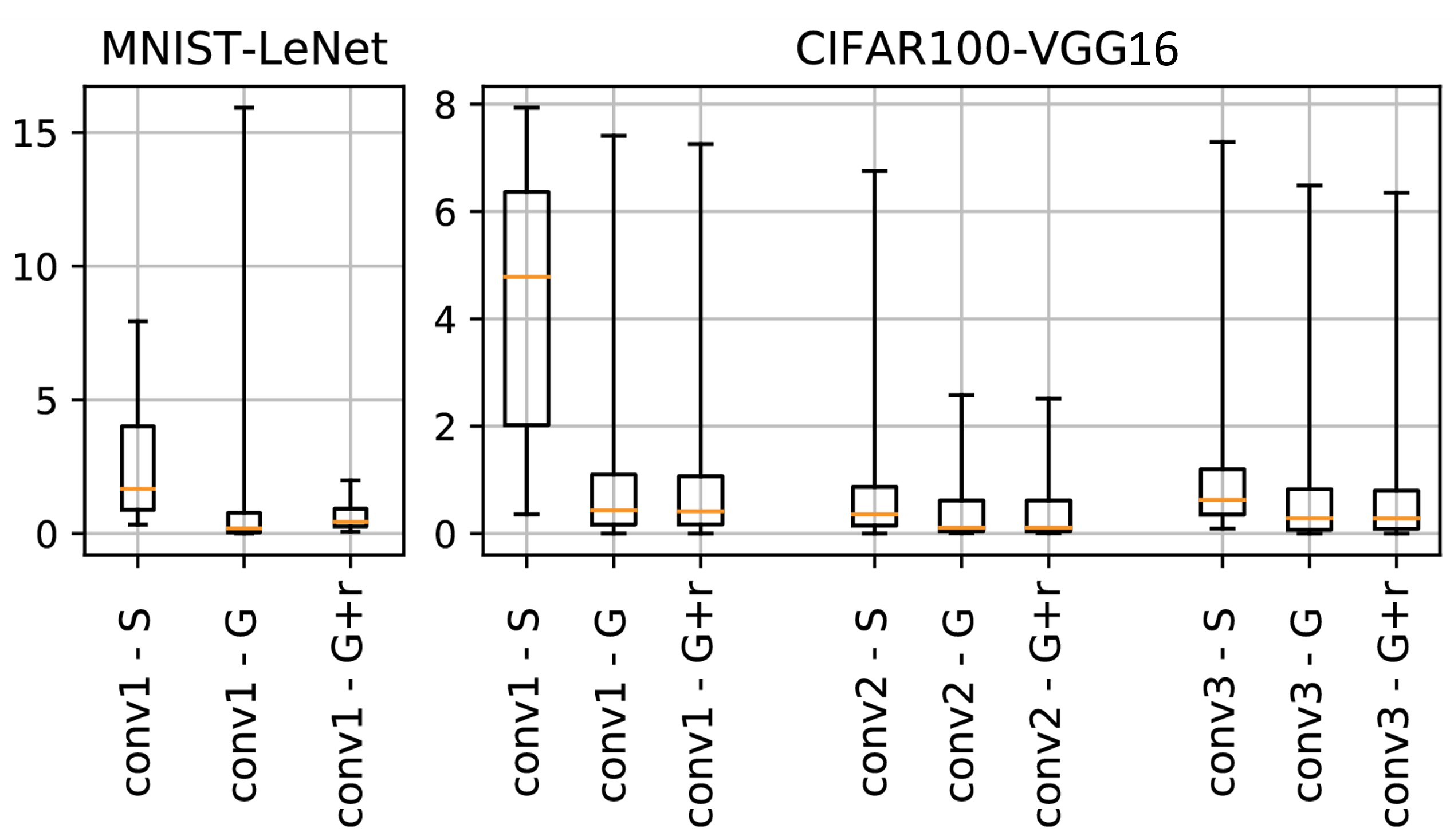}
\caption{\textbf{Box-plot representation of the distribution of singular values in layers of LeNet and VGG16.} Left: LeNet on MNIST. Right: VGG16 on CIFAR100. S: Standard; G: Gabor-layered; G+r: Gabor-layered with regularization. The top notch of each box-plot corresponds to the maximum value of the distribution, \ie the Lipschitz constant of the layer. 
}
\label{fig:singular_values_wb}
\end{figure*}

\subsection{Robustness in Gabor-Layered Architectures}
After observing significant differences in the distribution of singular values between standard convolutional layers and Gabor layers, we now study the impact on robustness that Gabor layers introduce. We study the robustness properties of different architectures trained on various datasets when Gabor layers are introduced in the first layers of each network. The modifications that we perform on each architecture are: 

\begin{itemize}
    \item \textbf{LeNet.} We replace the first layer with a Gabor layer with $p = 2$. 
    
    \item \textbf{AlexNet.} We replace the first layer with a Gabor layer with $p = 7$.
    
    \item \textbf{WideResNet.} We replace the first layer with a Gabor layer with $p = 4$ for SVHN, and $p=3$ for CIFAR100.
    
    \item \textbf{VGG16.} We replace the first three layers with Gabor layers with parameters $p = 3$, $p = 1$ and $p = 3$, respectively. 
\end{itemize}

We present the details for the choice of $p$ in the \textbf{supplementary material}.

\textbf{Standard Architectures \textit{vs.} Gabor-Layered Architectures.} We report the adversarial accuracies on both standard architectures and Gabor-layered architectures, where we refer to each with \say{S} and \say{G}, respectively. Table \ref{tab:maintable} presents results on SVHN, CIFAR10 and CIFAR100, and Table \ref{tab:mnist} presents results on MNIST. We observe that Gabor-layered architectures consistently outperform their standard counterparts across datasets, and can provide up to a $9\%$ boost in adversarial robustness. For instance, with $\epsilon = \nicefrac{8}{255}$ attacks, introducing Gabor layers into VGG16 boosts adversarial accuracy from $23.63$ to $30.11$ ($6.48\%$ relative increment) and from $5.84$ to $14.57$ ($8.73\%$ relative increment) on CIFAR10 and SVHN (Table \ref{tab:maintable}), respectively. For LeNet on MNIST, under an  $\epsilon = 0.2$ attack, introducing Gabor layers can boost adversarial accuracy from $4.39\%$ to $7.94\%$ ($80\%$ relative increment). Additionally, we report the flip rates for these experiments in the \textbf{supplementary material}. On the flip rates, we conduct a similar analysis, which yields equivalent conclusions: introducing Gabor layers leads to boosts in robustness. 

Moreover, to explore whether the robustness effect of using Gabor-layers holds in the regime of large-scale datasets, we conduct an experiment on ImageNet \cite{russakovsky2015imagenet}, which we report in the \textbf{supplementary material}. Due to limitations in computational resources, we conduct attacks only for $\epsilon \in \{\nicefrac{8}{255},\nicefrac{16}{255}\}$. In this experiment, we again measure adversarial accuracy and flip rates, and observe similar gains in robustness in terms of flip rates.

It is worthwhile to note that the increase in robustness we observe in the Gabor-layered networks came \textit{solely} from an architectural change, \ie replacing convolutional layers with Gabor layers, without there being any other modification. Our experimental results demonstrate that: (\textbf{1}) Simply introducing Gabor filters, in the form of Gabor layers, as low-level feature extractors in DNNs provides beneficial effects to robustness. (\textbf{2}) Such robustness improvements are consistent across datasets and architectures. Inspired by these results, we now investigate the robustness effects of using our proposed regularization approach, as proposed in Equations (\ref{eq:modified_loss1}) and (\ref{eq:modified_loss2}).

\begin{table*}[t]
\centering
\caption{\label{tab:maintable}\textbf{Adversarial accuracy comparison}. We compare Standard (S), Gabor-layered (G), and \textit{regularized} Gabor-layered (G+r) architectures. For each attack strength ($\epsilon$), the highest performance is in \textbf{bold}; second-highest is \unde{underlined}.}
\vspace{0.2cm}
\resizebox{\columnwidth}{!}{
\begin{tabular}{c c||c c c|c c c|c c c}
\toprule
\multicolumn{2}{c|}{$\epsilon$}                     &  \multicolumn{3}{c|}{$\nicefrac{2}{255}$}     &  \multicolumn{3}{c|}{$\nicefrac{8}{255}$}     & \multicolumn{3}{c}{$\nicefrac{16}{255}$}      \\\midrule
Dataset&Network                & S             & G             & G+r           & S             & G             & G+r           & S             & G             & G+r           \\\midrule
SVHN&WRN            & 40.27         & \unde{49.35}  & \textbf{53.36}& 1.02          & \unde{1.03}   & \textbf{1.13} & \unde{1.32}   & \textbf{1.36} & 1.19          \\
SVHN&VGG16                    & 57.86         & \unde{62.88}  & \textbf{64.03}& 5.84          & \unde{14.57}  & \textbf{15.99}& 2.33          & \unde{7.98}   & \textbf{8.88} \\
CIFAR10&VGG16                  & 34.22         & \unde{37.60}  & \textbf{38.07}& 23.63         & \unde{30.11}  & \textbf{30.69}& 13.88         & \unde{19.50}  & \textbf{19.93}\\
CIFAR100&AN             & \textbf{15.05}& \unde{14.77}  & 14.68         & 4.80          & \unde{7.71}   & \textbf{7.88} & 5.37          & \unde{6.25}   & \textbf{6.67} \\
CIFAR100&WRN   & 4.52 & \unde{8.06} & \textbf{9.33} & 2.38 & \unde{2.92} & \textbf{3.07} & 1.65 & \unde{2.4} & \textbf{2.59} \\
CIFAR100&VGG16          & 27.22         & \unde{31.12}  & \textbf{31.68}& 18.46         & \unde{25.82}  & \textbf{26.64}& 10.49         & \unde{15.40}  & \textbf{16.06}
\\\toprule
\end{tabular}}
\end{table*}

\textbf{Robustness Effects of Introducing Regularization.} To better control robustness and to regularize the Lipschitz constant we derived in Theorem \ref{theorem:gabor_lipsch}, we proposed two regularizers for the loss function as per Equations (\ref{eq:modified_loss1}) and (\ref{eq:modified_loss2}). As we noted in Subsection \ref{subsec:sing_vals} (refer to Figure \ref{fig:singular_values_wb}), Gabor-layered architectures inherently tend to have lower singular values than their standard counterparts. Additionally, upon training the same architectures \textit{with} the proposed regularizer, we observe that Gabor layers tend to enjoy further reduction in the Lipschitz constant. These results suggest that such architectures may have enhanced robustness properties as a consequence.

\begin{table*}[t]
\centering
\small
\caption{\label{tab:mnist}\small\textbf{Adversarial accuracy comparison on MNIST}. We compare Standard (S), Gabor-layered (G), and \textit{regularized} Gabor-layered (G+r) architectures. For each attack strength ($\epsilon$), the highest performance is in \textbf{bold}; second-highest is \unde{underlined}.}
\vspace{0.2cm}
\begin{tabular}{c c||c c c|c c c|c c c}
\toprule
\multicolumn{2}{c|}{$\epsilon$}                     &  \multicolumn{3}{c|}{$0.1$}     &  \multicolumn{3}{c|}{$0.2$}     & \multicolumn{3}{c}{$0.3$}      \\\midrule
Dataset&Network                & S             & G             & G+r           & S             & G             & G+r           & S             & G             & G+r           \\\midrule
MNIST&LeNet                & 80.04         & \unde{80.58}  & \textbf{88.42}& 4.39          & \unde{7.94}   & \textbf{22.69}& 0.44          & \textbf{0.78} & \unde{0.76} \\\toprule
\end{tabular}
\end{table*}

To assess the role of the proposed regularizer on robustness, we train Gabor-layered architectures from scratch following the same parameters from Subsection~\ref{subsec:impl_details} and include the regularizer. We present the results in Tables \ref{tab:maintable} and \ref{tab:mnist}, where we refer to these regularized architectures as \say{G+r}. We observe that, in most cases, adding the regularizer improves adversarial accuracy. For instance, for LeNet on MNIST, the regularizer improves adversarial accuracy over the Gabor-layered architecture without any regularization by $8\%$ and $14\%$ with $\epsilon = \nicefrac{2}{255}, \nicefrac{8}{255}$ attacks, respectively. This improvement is still present in more challenging datasets. For instance, for VGG16 on SVHN under attacks with $\epsilon = \nicefrac{2}{255}$ and $\epsilon=\nicefrac{8}{255}$, we observe increments of over $1\%$ from the regularized architecture with respect to its non-regularized equivalent. For the rest of the architectures and datasets, we observe modest but, nonetheless, sustained increments in performance. We report the flip rates for these experiments in the \textbf{supplementary material}. The conclusions obtained from analyzing the flip rates are analogous to what we conclude from the adversarial accuracies: applying regularization on these layers provides minor but consistent improvements in robustness.

It is worthy to note that, although the implementation of the regularizer is trivial and that we perform optimization including the regularizer for the same number of epochs as regular training, our results still show that it is possible to achieve desirable robustness properties. We expect that substantial modifications and optimization heuristics can be applied in the training procedure, with aims at stronger exploitation of the insights we have provided here, and most likely resulting in more significant boosts in robustness.

\subsection{Effects of Adversarial Training}

Adversarial training \cite{madry_defense} has become the standard approach for tackling robustness. In these experiments, we investigate how Gabor layers interact with adversarial training. We study whether the increments in robustness we observed when introducing Gabor layers can still be observed in the regime of adversarially-trained models. To study such interaction, we adversarially train standard, Gabor-layered and regularized Gabor-layered architectures, and then compare their robustness properties. We use the adversarial training described in \cite{freeadv}, with $8$ mini-batch replays, and $\epsilon = \nicefrac{8}{255}$. 

In Table \ref{tab:adv_training2}, we report adversarial accuracies for AlexNet on CIFAR10 and CIFAR100 under $\epsilon = \nicefrac{8}{255}$ attacks. Even in the adversarially trained networks-regime, our experiments show that \textbf{(1)} Gabor-layered architectures outperform their standard counterparts, and \textbf{(2)} regularization of Gabor-layered architectures provides substantial improvements in robustness with respect to their non-regularized equivalents. Such results demonstrate that Gabor layers represent an orthogonal approach towards robustness and, hence, that Gabor layers and adversarial training can be jointly harnessed for enhancing the robustness of DNNs.

The results we present here are empirical evidence that using closed form expressions for filter-generating functions in convolutional layers can be exploited for the purpose of increasing robustness in DNNs. We refer the interested reader to the \textbf{supplementary material} for the rest of the experimental results.

\begin{table}[t]
\centering
\caption{\label{tab:adv_training2} Adversarial accuracy with $\epsilon = \nicefrac{8}{255}$.}
\vspace{0.1cm}
\centering
\begin{tabular}{c|c|c||c|c||c|c}
\multicolumn{3}{c}{}   & \multicolumn{2}{c}{CIFAR10} & \multicolumn{2}{c}{CIFAR100} \\
\toprule
Gabor   & Reg.  & Adv. training & AlexNet       & VGG16             & AlexNet       & VGG16             \\\midrule 
        &       & \cmark        & 19.24         & 41.95             & 13.48         & 18.49             \\\hline 
\cmark  &       & \cmark        & 20.26         & 42.41             &10.94          & \textbf{19.66}    \\\hline 
\cmark  &\cmark & \cmark        & \textbf{22.15}& \textbf{44.02}& \textbf{13.62}& 19.41             \\\toprule
\end{tabular}
\end{table}

\section{Conclusions}
In this work, we study the effects in robustness of architectural changes in convolutional neural networks. We show that introducing Gabor layers consistently improves the robustness across various neural network architectures and datasets. We also show that the Lipschitz constant of the filters in these Gabor layers tends to be lower than that of traditional filters, which was theoretically and empirically shown to be beneficial to robustness \cite{parseval_nets}. Furthermore, theoretical analysis allows us to find a closed form expression for a Lipschitz constant of the Gabor filters. We then leverage this expression as a regularizer in the pursuit of enhanced robustness, and validate its usefulness experimentally. Finally, we study the interaction between Gabor layers, our regularizer, and adversarial training, and show that the benefits of using Gabor layers are still observed when deep learning models are specifically trained for the purpose of adversarial robustness, showing that Gabor layers can be jointly used with adversarial training for further enhancements in robustness.

\clearpage
\textbf{Acknowledgments.} This work was partially supported by the King Abdullah University of Science and Technology (KAUST) Office of Sponsored Research.

{\small
\bibliographystyle{splncs04}

}


\clearpage
\begin{center}
    \Large{\textbf{Supplementary Material}}
\end{center}

Next, we present the Supplementary Material for the paper \say{Gabor Layers Enhance Network Robustness}. In Section \ref{sec:suppl_pspace} we report the details for the choice of the $p$ parameter of the Gabor layers for the various experiments we present in the paper; Section \ref{sec:suppl_fliprates} shows the flip rates for all the experiments we reported; Section \ref{sec:suppl_imagenet} presents the results for one experiment on ImageNet \cite{russakovsky2015imagenet}, both in terms of adversarial accuracy and flip rate; in Section \ref{sec:suppl_singvals} we present several comparisons of distributions of singular values of standard convolutional layers \textit{vs.} Gabor layers; finally, Section \ref{sec:suppl_filtervis} depicts visualizations of the filters learned in the Gabor layers when various architectures are trained on numerous datasets.

\section{Number of families $p$ search space}\label{sec:suppl_pspace}
We report the details for the search for the $p$ parameter in Tables \ref{tab:lenet-mnist-p} through \ref{tab:wrn-svhn-p2}.

\begin{table*}
\centering
\caption{\label{tab:lenet-mnist-p}\textbf{Search Space for LeNet on MNIST} Test set accuracies and flip rates on MNIST with LeNet. Only the first convolutional layer was enhanced with $p$ families.}
\small
\begin{tabular}{c||c c|c c|c c}
\toprule
            & \multicolumn{2}{c|}{Standard}     &  \multicolumn{2}{c|}{$p=2$}    & \multicolumn{2}{c}{$p=3$}\\\midrule
$\epsilon$  & Accuracy      & Flip Rate         & Accuracy           & Flip Rate     & Accuracy          & Flip Rate     \\ \midrule
$0$         & 99.22         & -                 &  99.03             & -             & 99.10             & -             \\ \hline
$0.1$       & 80.04         & 19.53             &  80.58             & 18.88         & 62.98             & 36.54         \\ \hline
$0.2$       & 4.39          & 95.47             &  7.94              & 91.85         & 2.22              & 97.67         \\ \hline
$0.3$       & 0.44          & 99.7              &  0.78              & 99.24         & 0.41              & 99.68         \\ \toprule
\end{tabular}
\end{table*}

\begin{table*}
\centering
\caption{\label{tab:an-cifar100-p}\textbf{Search Space for AlexNet on CIFAR100.} Test set accuracies comparison with the standard AlexNet and his enhanced versions. Only the first convolutional layer was enhanced. S stands for standard.}
\small
\begin{tabular}{c||c||c|c|c|c|c|c|c|c}
\multicolumn{10}{c}{\textbf{Accuracy}}\\\toprule
$\epsilon$          & S     & $p=2$ & $p=3$ & $p=4$ & $p=5$ & $p=6$ & $p=7$ & $p=8$ & $p=9$ \\\midrule
$0$                 & 46.48 & 42.14 & 42.81 & 41.99 & 43.19 & 42.32 & 45.15 & 42.69 & 43.36\\ \hline
$\nicefrac{2}{255}$ & 15.08 & 11.84 & 13.01 & 13.41 & 13.62 & 13.55 & 14.77 & 12.82 & 13.16\\ \hline
$\nicefrac{8}{255}$ & 4.80  & 6.82  & 7.40  & 6.74  & 7.81  & 7.01  & 7.71  & 7.86  & 7.73\\ \hline
$\nicefrac{16}{255}$& 5.37  & 5.92  & 6.12  & 5.75  & 6.23  & 5.34  & 6.25  & 6.07  & 6.29\\ \toprule
\end{tabular}
\end{table*}

\begin{table*}
\centering
\caption{\label{tab:vgg-cifar100-p1}\textbf{Search Space for VGG16 on CIFAR100 -- First Layer.} Test set accuracies comparison with the standard VGG16 and his enhanced versions. The first Gabor layer number of families $p_1$ is explored. S stands for standard.}
\small
\begin{tabular}{c||c||c|c|c|c|c|c}
\multicolumn{8}{c}{\textbf{Accuracy}}\\\toprule
$\epsilon$          & S     & $p_1=1$ & $p_1=2$ & \textbf{$p_1=3$} & $p_1=4$ & $p_1=5$ & $p_1=6$ \\\midrule
$0$                 & 67.54 & 65.82 & 67.35 & 67.05 & 68.65 & 68.10 & 66.23 \\ \hline
$\nicefrac{2}{255}$ & 27.22 & 28.35 & 27.89 & 27.31 & 27.83 & 27.32 & 24.83 \\ \hline
$\nicefrac{8}{255}$ & 18.46 & 23.05 & 20.78 & 20.05 & 20.66 & 19.44 & 18.95 \\ \hline
$\nicefrac{16}{255}$& 10.49 & 13.92 & 12.18 & 11.30 & 11.02 & 10.64 & 11.29\\ \toprule
\end{tabular}
\end{table*}

\begin{table*}
\centering
\caption{\label{tab:vgg-cifar100-p2}\textbf{Search Space for VGG16 on CIFAR100 -- Second Layer.} Test set accuracies comparison with the standard VGG16 and his enhanced versions. The first convolutional layer is modified with $p_1=3$ families. The second Gabor layer number of families $p_2$ is explored. S stands for standard.}
\small
\begin{tabular}{c||c||c||c|c|c|c|c}
\multicolumn{8}{c}{\textbf{Accuracy}}\\\toprule
$\epsilon$          & S     & $p_1=3$ & \textbf{$p_2=1$} & $p_2=2$ & $p_2=3$ & $p_2=4$ & $p_2=5$ \\ \midrule
$0$                 & 67.54 & 67.05   & 64.10   & 67.68   & 66.67   & 65.06   & 64.81 \\ \hline
$\nicefrac{2}{255}$ & 27.22 & 27.31   & 31.36   & 16.57   & 27.43   & 28.25   & 27.92 \\ \hline
$\nicefrac{8}{255}$ & 18.46 & 20.05   & 26.09   & 10.41   & 21.15   & 22.50   & 22.00 \\ \hline
$\nicefrac{16}{255}$& 10.49 & 11.30   & 14.97   & 5.10    & 11.30   & 12.64   & 13.11 \\ \toprule
\end{tabular}
\end{table*}

\begin{table*}
\centering
\caption{\label{tab:vgg-cifar100-p3}\textbf{Search Space for VGG16 on CIFAR100 -- Third Layer.} Test set accuracies comparison with the standard VGG16 and his enhanced versions. The first and second convolutional layers are modified with $p_1=3$ and $p_2=1$ families respectively. The third Gabor layer number of families $p_3$ is explored. S stands for standard.}
\small
\begin{tabular}{c||c||c||c|c}
\multicolumn{5}{c}{\textbf{Accuracy}}\\\toprule
$\epsilon$          & S     & $p_1=3, p_2=1$ & \textbf{$p_3=2$} & $p_3=3$ \\ \midrule
$0$                 & 67.54 & 64.10          & 63.74   & 64.49 \\ \hline
$\nicefrac{2}{255}$ & 27.22 & 31.36          & 17.13   & 31.12 \\ \hline
$\nicefrac{8}{255}$ & 18.46 & 26.09          & 14.67   & 25.82 \\ \hline
$\nicefrac{16}{255}$& 10.49 & 14.97          & 9.20    & 15.40\\ \toprule
\end{tabular}
\end{table*}

\begin{table*}
\centering
\caption{\label{tab:wrn-svhn-p1}\textbf{Search Space for Wide-ResNet on SVHN -- First Layer.} Test set accuracies comparison with the standard Wide-ResNet and his enhanced versions. The first Gabor layer number of families $p_1$ is explored. S stands for standard.}
\small
\begin{tabular}{c||c||c|c|c|c|c}
\multicolumn{7}{c}{\textbf{Accuracy}}\\\toprule
$\epsilon$          & S     & $p_1=2$ & $p_1=3$ & $p_1=4$ & $p_1=5$ & $p_1=6$ \\ \midrule
$0$                 & 96.62 & 96.93   & 96.72   & 96.70   & 96.65   & 96.73   \\ \hline
$\nicefrac{2}{255}$ & 40.27 & 45.84   & 41.37   & 49.35   & 43.56   & 45.66   \\ \hline
$\nicefrac{8}{255}$ & 1.03  & 0.93    & 1.09    & 1.03    & 1.08    & 1.03    \\ \hline
$\nicefrac{16}{255}$& 1.32  & 1.11    & 1.32    & 1.42    & 1.38    & 1.28    \\ \toprule
\end{tabular}
\end{table*}

\begin{table*}
\centering
\caption{\label{tab:wrn-svhn-p2}\textbf{Search Space for Wide-ResNet on SVHN -- Second Layer.} Test set accuracies comparison with the standard Wide-ResNet and his enhanced versions. The first convolutional layer is modified with $p_1=4$ families. The second Gabor layer number of families $p_2$ is explored. S stands for standard.}
\small
\begin{tabular}{c||c||c||c|c|c|c}
\multicolumn{7}{c}{\textbf{Accuracy}}\\\toprule
$\epsilon$          & S     & $p_1=4$ & $p_2=1$ & $p_2=2$ & $p_2=3$ & $p_2=4$\\ \midrule
$0$                 & 96.62 & 96.70   & 96.67   & 96.71   & 96.60   & 96.71\\ \hline
$\nicefrac{2}{255}$ & 40.27 & 49.35   & 41.83   & 44.50   & 44.11   & 44.41\\ \hline
$\nicefrac{8}{255}$ & 1.03  & 1.03    & 1.01    & 0.99    & 0.98    & 1.03 \\ \hline
$\nicefrac{16}{255}$& 1.32  & 1.42    & 1.24    & 1.31    & 1.28    & 1.26 \\ \toprule
\end{tabular}
\end{table*}

\section{Flip rates}\label{sec:suppl_fliprates}
In our work, we assess the robustness of Deep Neural Networks (DNNs) through adversarial accuracies (reported in the main document) and flip rates. Next, we report the flip rates for the main experiments of the paper. Table \ref{tab:fliprates_supp} shows results on SVHN, CIFAR10 and CIFAR100, and Table \ref{tab:fliprates_mnist_supp} presents results on MNIST.

\begin{table*}[t]
\centering
\caption{\label{tab:fliprates_supp}\textbf{Flip rates comparison}. We compare Standard (S), Gabor-layered (G), and \textit{regularized} Gabor-layered (G+r) architectures. For each attack strength ($\epsilon$), the lowest flip rate is in \textbf{bold}; second-lowest is \unde{underlined}.}
\vspace{0.2cm}
\resizebox{\columnwidth}{!}{
\begin{tabular}{c c||c c c|c c c|c c c}
\toprule
\multicolumn{2}{c|}{$\epsilon$}                     &  \multicolumn{3}{c|}{$\nicefrac{2}{255}$}     &  \multicolumn{3}{c|}{$\nicefrac{8}{255}$}     & \multicolumn{3}{c}{$\nicefrac{16}{255}$}      \\\midrule
Dataset&Network & S             & G             & G+r           & S             & G             & G+r           & S             & G             & G+r           \\\midrule
SVHN&WRN        & 57.22         & \unde{48.13}  & \textbf{44.19}& 98.30         & \textbf{98.13}& \unde{98.17}  & 99.06         & \unde{99.02}  & \textbf{99.01}         \\
SVHN&VGG16      & 39.53         & \unde{34.11}  & \textbf{32.62}& 92.82         & \unde{83.83}  & \textbf{82.33}& 97.40         & \unde{91.56}  & \textbf{90.78}         \\
CIFAR10&VGG16   & 60.03         & \unde{56.42}  & \textbf{55.71}& 74.51         & \unde{67.83}  & \textbf{67.46}& 86.47         & \unde{80.84}  & \textbf{80.18}\\
CIFAR100&AN     & 56.88         & \unde{49.73}  & \textbf{49.15}& 81.38         & \textbf{72.82}& \unde{72.87}  & 87.77         & \textbf{83.00}& \unde{83.08}  \\
CIFAR100&WRN    & 82.05         & \unde{76.52}  & \textbf{74.72}& 93.82         & \unde{92.58}  & \textbf{92.25}& 96.94         & \unde{96.59}  & \textbf{96.35}         \\
CIFAR100&VGG16  & 57.05         & \textbf{50.94}& \unde{51.05}  & 77.94         & \unde{68.95}  & \textbf{68.45}& 90.48         & \unde{85.75}  & \textbf{85.16}
\\\toprule
\end{tabular}}
\end{table*}

\begin{table*}[t]
\centering
\caption{\label{tab:fliprates_mnist_supp}\textbf{Flip rates comparison on MNIST}. We compare Standard (S), Gabor-layered (G), and \textit{regularized} Gabor-layered (G+r) architectures on MNIST. For each attack strength ($\epsilon$), the lowest flip rate is in \textbf{bold}; second-lowest is \unde{underlined}.}
\vspace{0.2cm}
\resizebox{\columnwidth}{!}{
\begin{tabular}{c c||c c c|c c c|c c c}
\toprule
\multicolumn{2}{c|}{$\epsilon$}                     &  \multicolumn{3}{c|}{$0.1$}     &  \multicolumn{3}{c|}{$0.2$}     & \multicolumn{3}{c}{$0.3$}             \\\midrule
Dataset&Network & S             & G             & G+r           & S             & G             & G+r           & S             & G             & G+r           \\\midrule
MNIST&LeNet     & 19.53         & \unde{18.88}  & \textbf{11.05}& 95.47         & \unde{91.85}  & \textbf{77.27}& 99.70         & \textbf{99.24}& \unde{99.64}  \\\toprule
\end{tabular}}
\end{table*}

\section{ImageNet Results}\label{sec:suppl_imagenet}
We conduct adversarial attacks for $\epsilon \in\{\nicefrac{8}{255}, \nicefrac{16}{255}\}$. In Table \ref{tab:imagenet_results} we report the adversarial accuracies and flip rates for VGG16 \cite{vgg_simonyan} trained on ImageNet \cite{russakovsky2015imagenet}.

\begin{table*}
\centering
\caption{\label{tab:imagenet_results} \textbf{Adversarial accuracy and flip rate comparison for VGG16 on ImageNet.} We compare Standard (S) and Gabor-layered (G) architectures. For each attack strength ($\epsilon$), the best performance is in \textbf{bold}.}
\small
\begin{tabular}{c||p{1.3cm} p{1.3cm}||p{1.3cm} p{1.3cm}}
& \multicolumn{2}{c||}{\textbf{Adv. Accuracy}} & \multicolumn{2}{c}{\textbf{Flip Rate}}\\\toprule
$\epsilon$          & S             & G     & S     & G             \\ \midrule
$0$                 & \textbf{71.20}& 68.90 & -     & -             \\ \hline
$\nicefrac{8}{255}$ & \textbf{2.95} & 2.24  & 95.07 & \textbf{94.46}\\ \hline
$\nicefrac{16}{255}$& \textbf{3.33} & 3.15  & 97.37 & \textbf{96.68}\\ \toprule
\end{tabular}

\end{table*}

\section{Singular Values}\label{sec:suppl_singvals}
We report the distribution of singular values of the filters of up to the first three convolutional layers of several Convolutional Neural Network (CNN) architectures (LeNet~\cite{lecun_lenet}, AlexNet~\cite{alexnet}, WideResNet~\cite{wideresnet_zagoruyko}, and VGG16~\cite{vgg_simonyan}) trained in various datasets (MNIST~\cite{lecun_mnist}, CIFAR10, CIFAR100~\cite{Cifars}, and ImageNet~\cite{russakovsky2015imagenet}). The singular values of the layers are computed following~\cite{sedghi2018the,bibi2018deep}. The largest singular value of each layer's filter corresponds to the filters' Lipschitz constant~\cite{bibi2018deep}. 

In each histogram plot we show the distribution of singular values of (1) the standard architecture, in blue, and (2) the Gabor-layered architecture, in orange. For some experiments, we also show the distribution of singular values of the Gabor-layered architecture \textit{with} regularization, in green. 

For visualization purposes, we set the upper x-limit of each histogram plot to the $95$th percentile of the distribution with the largest maximum value.

Next, we list the dataset-architecture pairs, and the Figures in which its distributions are shown:
\begin{itemize}
    \item \textbf{MNIST-LeNet.} Figure~\ref{fig:sing_vals_mnist_lenet_conv1}.
    
    \item \textbf{CIFAR100-AlexNet.} Figure~\ref{fig:sing_vals_cifar100_alexnet_conv1}.
    
    \item \textbf{CIFAR10-VGG16.} Figures~\ref{fig:sing_vals_cifar10_vgg16_conv1}~-~\ref{fig:sing_vals_cifar10_vgg16_conv3}.
    
    \item \textbf{CIFAR100-VGG16.} Figures~\ref{fig:sing_vals_cifar100_vgg16_conv1}~-~\ref{fig:sing_vals_cifar100_vgg16_conv3}.
    
    \item \textbf{ImageNet-VGG16.} Figures~\ref{fig:sing_vals_imagenet_vgg16_conv1}~-~\ref{fig:sing_vals_imagenet_vgg16_conv3}.

\end{itemize}

In most cases, the distribution of singular values of the Gabor-layered version of the networks tends to be around smaller values, usually with high peaks between $0$ and $0.5$, than that of the standard network. 

In terms of the Lipschitz constant of the layers, in most cases we observe that the Gabor-layered versions of the layers have lower Lipschitz constants, and that applying regularization results in even lower Lipschitz constants.

\section{Filter visualizations}\label{sec:suppl_filtervis}
We report visualizations of the filters of the first convolutional layer of some of the architectures we experimented with. Figures~\ref{fig:lenet_std_kernels} through~\ref{fig:vgg16_imagenet_gabor_kernels} depict the filters. For the standard convolutional layers, we visualize each filter as an RGB image, where each \say{channel} of the filter is scaled to be between $0$ and $1$. 

For the Gabor layers, each filter has $1$ channel and, hence, we visualize it as is. We show the filters that share the Gabor function $G_\theta$ in the same row; while each column corresponds to one of the $8$ $\alpha$-scaled rotations of the filter.

We report the filters for:
\begin{itemize}
    \item \textbf{MNIST-LeNet.} Figures~\ref{fig:lenet_std_kernels}~-~\ref{fig:lenet_gabor_reg_kernels}.
    
    \item \textbf{CIFAR100-AlexNet.} Figures~\ref{fig:alexnet_std_kernels}~-~\ref{fig:alexnet_gabor_reg_kernels}.
    
    \item \textbf{ImageNet-VGG16.} Figures~\ref{fig:vgg16_imagenet_std_kernels}~-~\ref{fig:vgg16_imagenet_gabor_kernels}.
\end{itemize}

We note that, for LeNet on MNIST, the Gabor-layered version, without regularization, already has filters that strongly resemble a Dirac-delta function and that, as reported in the paper, this network already shows improvements in terms of robustness. Furthermore, when applying regularization, we observe that all filters in the layer become Dirac-delta functions (see Figure~\ref{fig:lenet_gabor_reg_kernels}). Again, as reported in the paper, regularization showed large gains in robustness.

The filters from AlexNet are useful for visualizing the modeling capabilities of Gabor functions, as shown in Figure~\ref{fig:alexnet_gabor_kernels}, where we observe blob-like patterns, and also oriented and scaled edges and bars. These patterns, while simpler than those of the standard convolutional layers (see Figure~\ref{fig:alexnet_std_kernels}), provide on-pair accuracy with such layers, while also providing gains in robustness. For the case of AlexNet, however, we observe that regularization has virtually no impact in the form of the filters that are learnt (see Figure~\ref{fig:alexnet_gabor_reg_kernels}).

In Figure~\ref{fig:vgg16_imagenet_std_kernels} we report the filters learnt by the standard version of VGG16 on ImageNet. Patterns are, however, not straightforward to visualize in these filters. By construction, the filters learnt by the Gabor-layered version of VGG16 when fine-tuned on ImageNet are more visually-appealing, as shown in Figure~\ref{fig:vgg16_imagenet_gabor_kernels}. Note, also, that some Gabor filters have strong similarities between one another.

\newpage

\begin{figure*}[!h]
\centering
\includegraphics[width=0.85\textwidth]{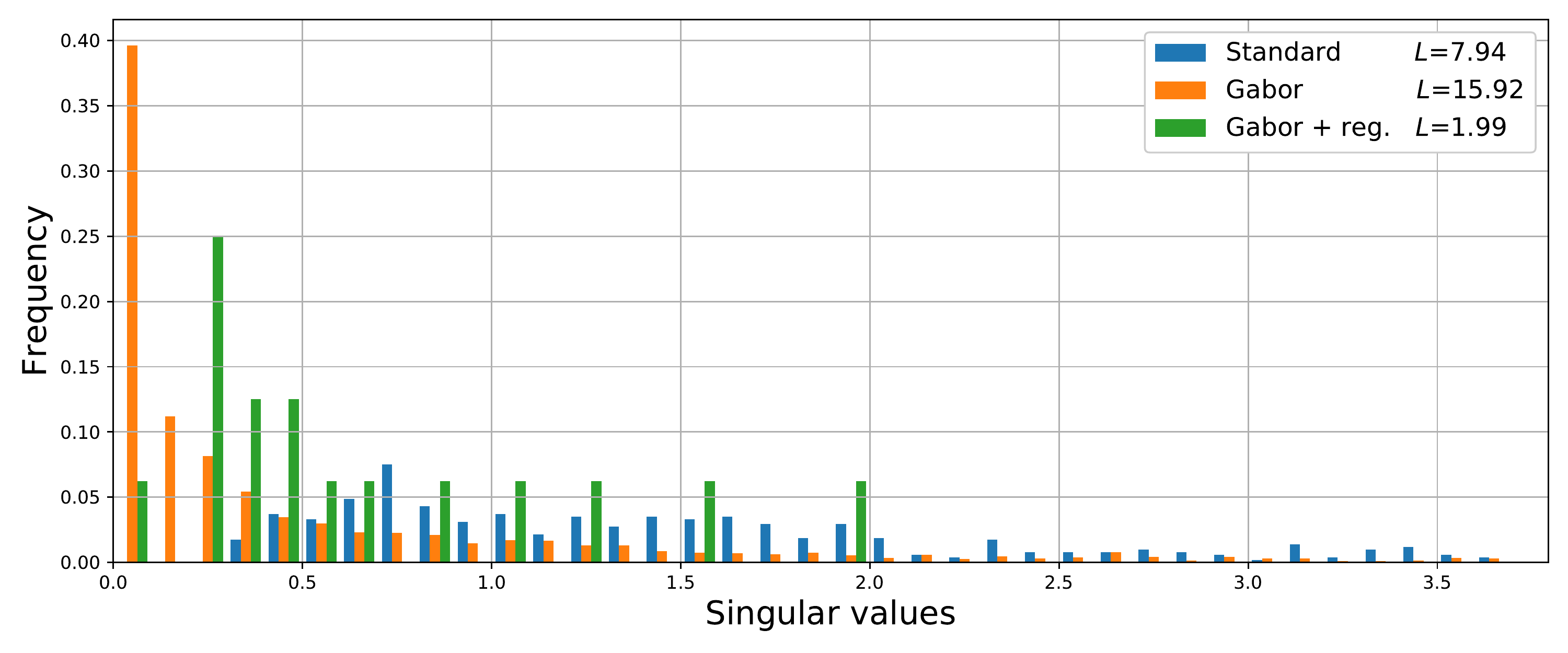}
\caption{\textbf{Distribution of singular values for the first layer of LeNet trained on MNIST.} The legend shows the largest singular value, \ie the Lipschitz constant of the layer.}
\label{fig:sing_vals_mnist_lenet_conv1}
\end{figure*}

\begin{figure*}[!h]
\centering
\includegraphics[width=0.85\textwidth]{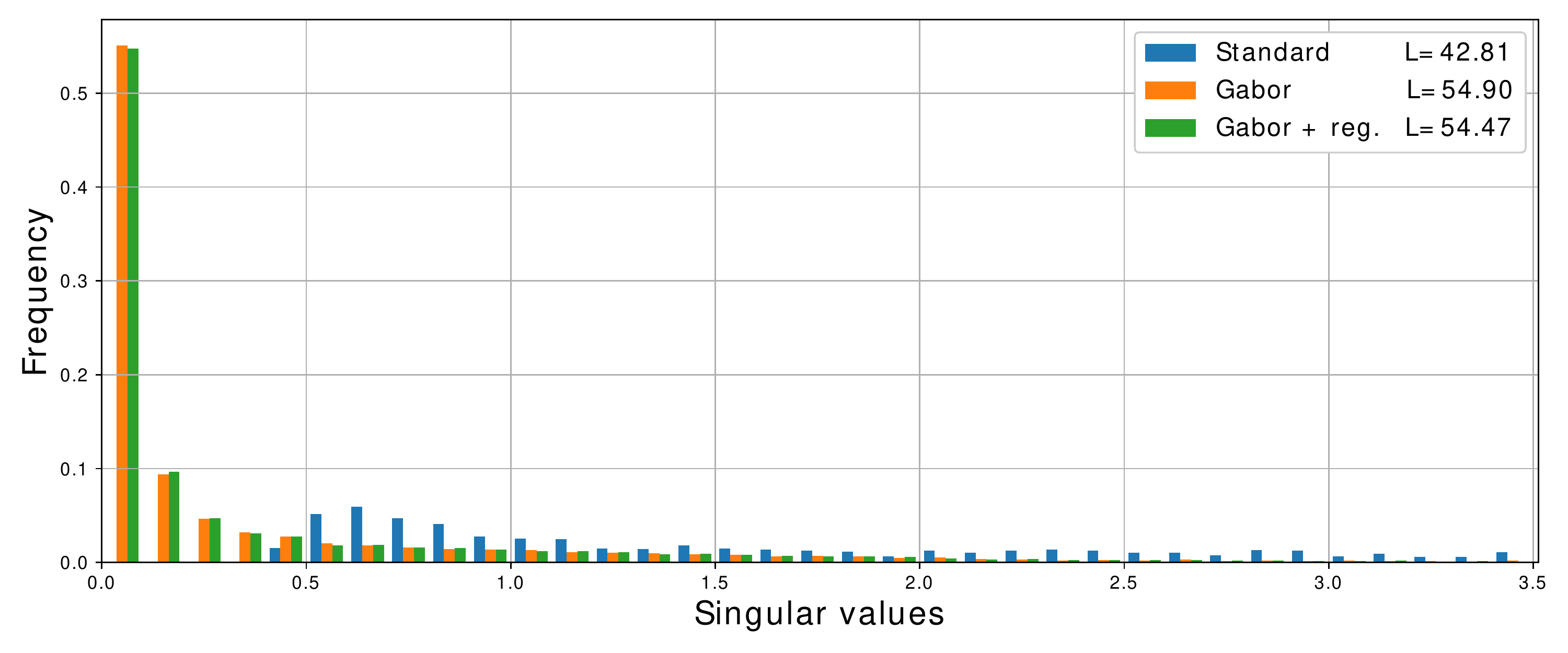}
\caption{\textbf{Distribution of singular values for the first layer of AlexNet trained on CIFAR100.} The legend shows the largest singular value, \ie the Lipschitz constant of the layer.}
\label{fig:sing_vals_cifar100_alexnet_conv1}
\end{figure*}

\begin{figure*}[!h]
\centering
\includegraphics[width=0.85\textwidth]{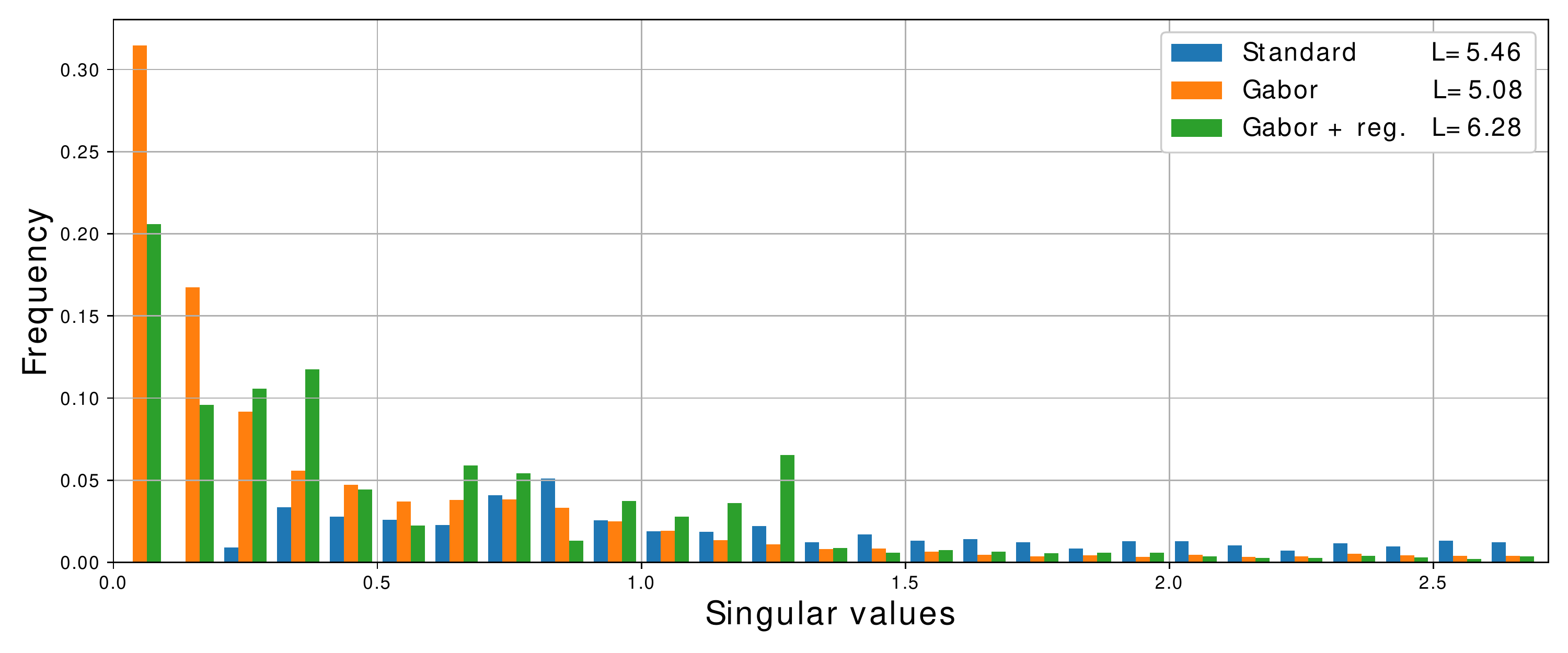}
\caption{\textbf{Distribution of singular values for the first layer of VGG16 trained on CIFAR10.} The legend shows the largest singular value, \ie the Lipschitz constant of the layer.}
\label{fig:sing_vals_cifar10_vgg16_conv1}
\end{figure*}

\begin{figure*}[!h]
\centering
\includegraphics[width=0.85\textwidth]{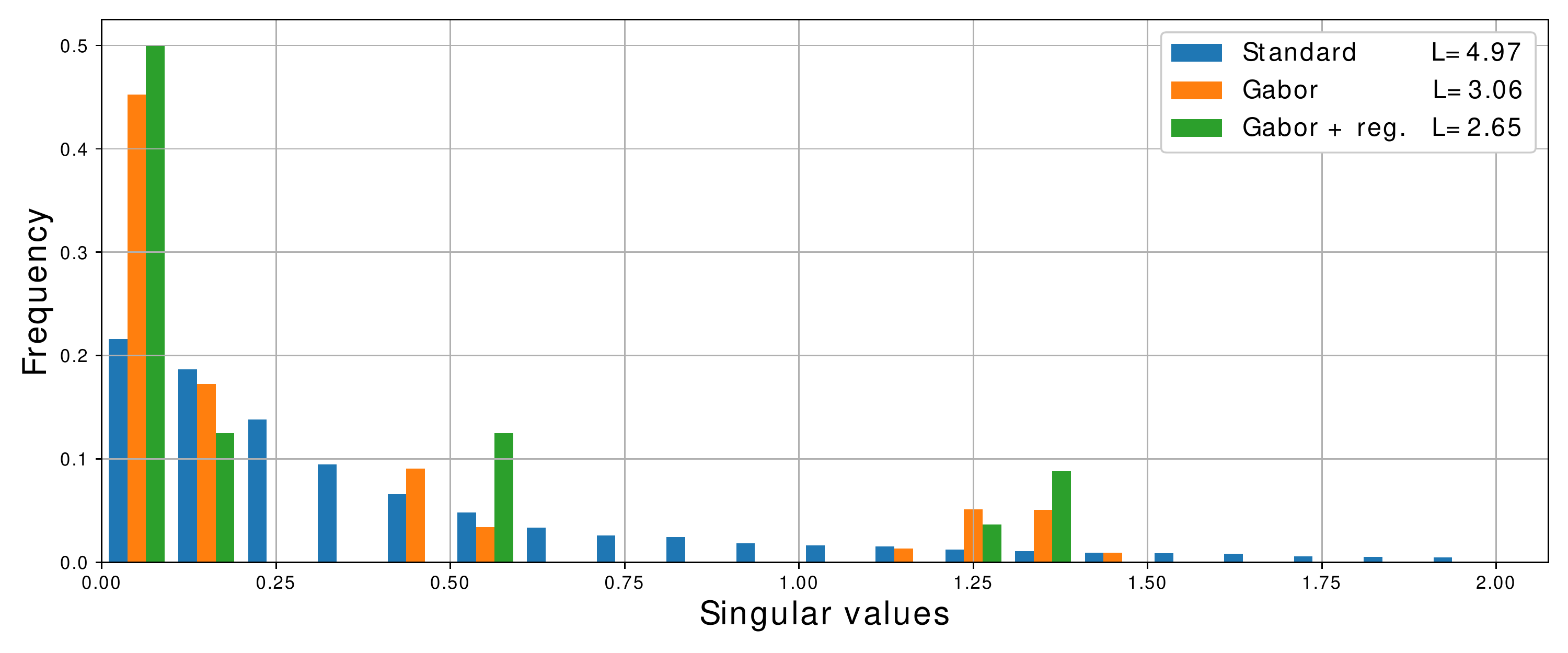}
\caption{\textbf{Distribution of singular values for the second layer of VGG16 trained on CIFAR10.} The legend shows the largest singular value, \ie the Lipschitz constant of the layer.}
\label{fig:sing_vals_cifar10_vgg16_conv2}
\end{figure*}

\begin{figure*}[!h]
\centering
\includegraphics[width=0.85\textwidth]{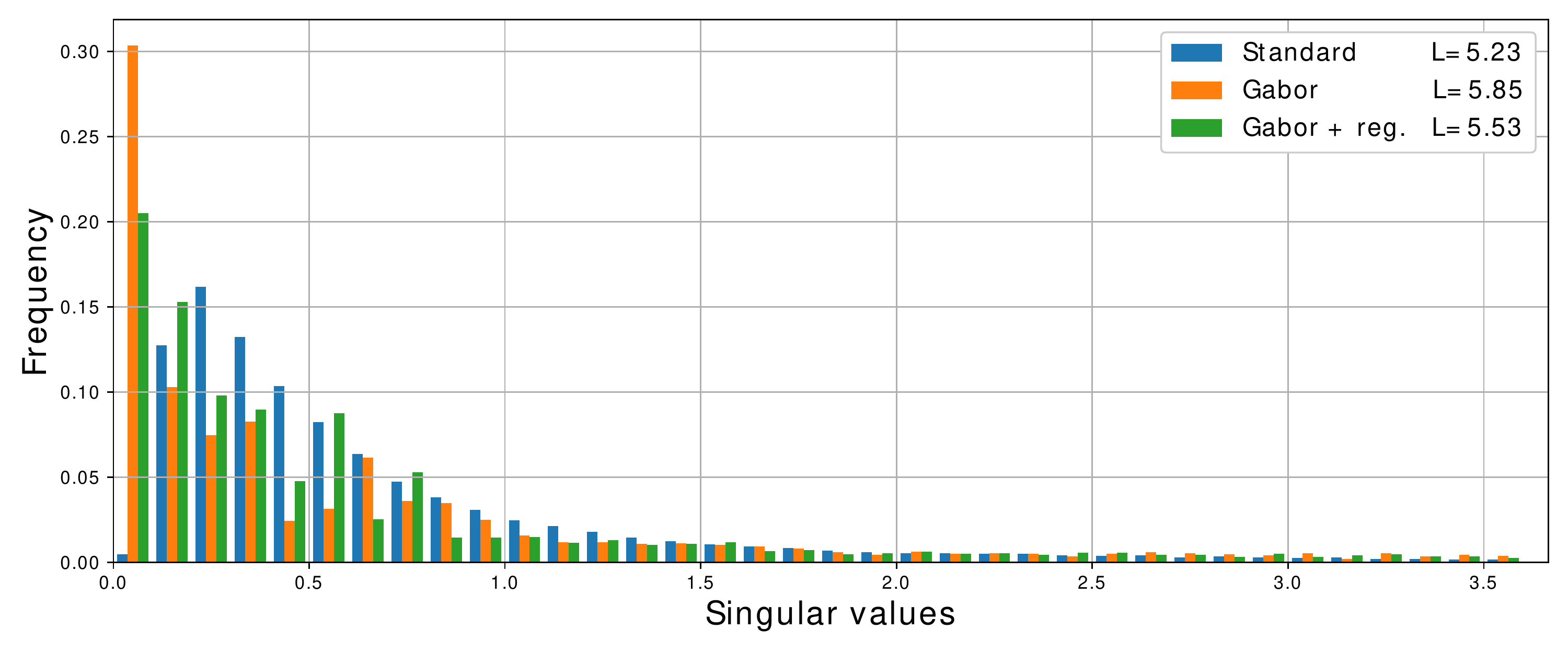}
\caption{\textbf{Distribution of singular values for the third layer of VGG16 trained on CIFAR10.} The legend shows the largest singular value, \ie the Lipschitz constant of the layer.}
\label{fig:sing_vals_cifar10_vgg16_conv3}
\end{figure*}

\begin{figure*}[!h]
\centering
\includegraphics[width=0.85\textwidth]{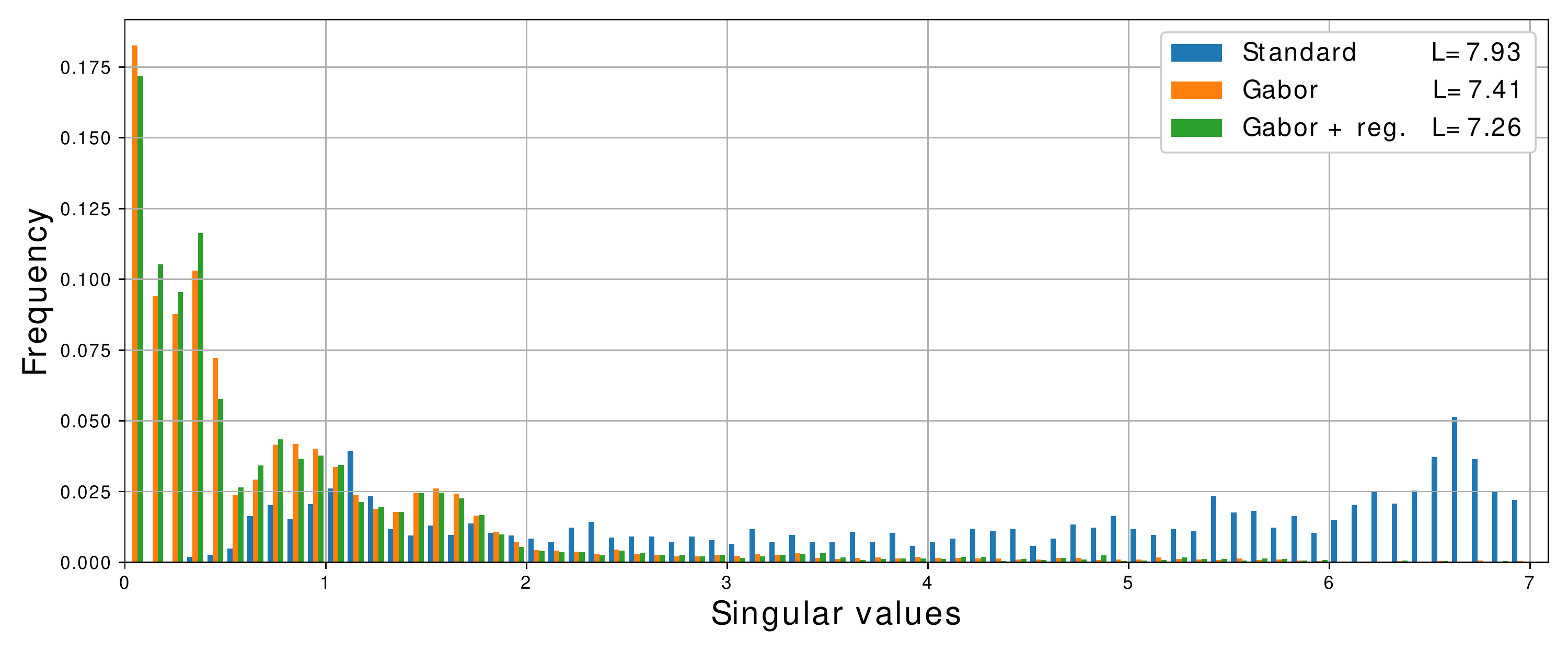}
\caption{\textbf{Distribution of singular values for the first layer of VGG16 trained on CIFAR100.} The legend shows the largest singular value, \ie the Lipschitz constant of the layer.}
\label{fig:sing_vals_cifar100_vgg16_conv1}
\end{figure*}

\begin{figure*}[!h]
\centering
\includegraphics[width=0.85\textwidth]{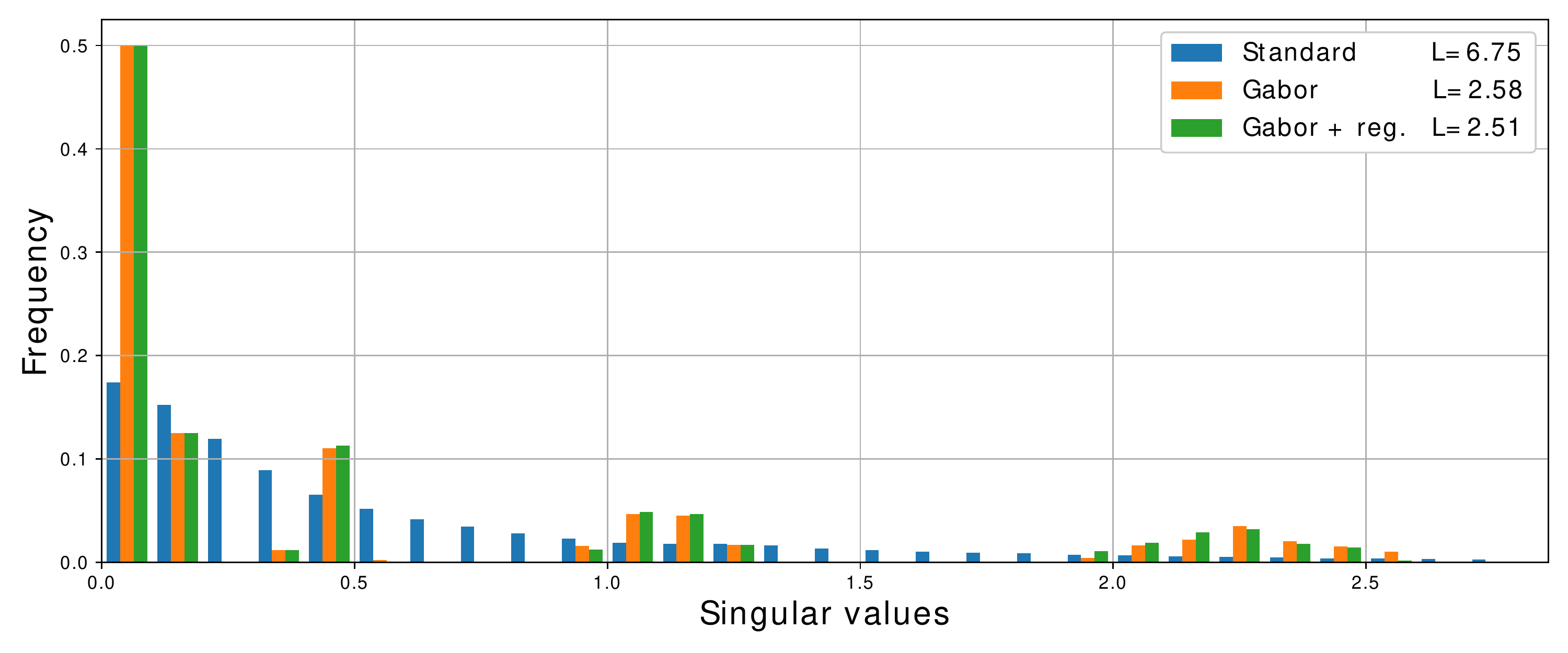}
\caption{\textbf{Distribution of singular values for the second layer of VGG16 trained on CIFAR100.} The legend shows the largest singular value, \ie the Lipschitz constant of the layer.}
\label{fig:sing_vals_cifar100_vgg16_conv2}
\end{figure*}

\begin{figure*}[!h]
\centering
\includegraphics[width=0.85\textwidth]{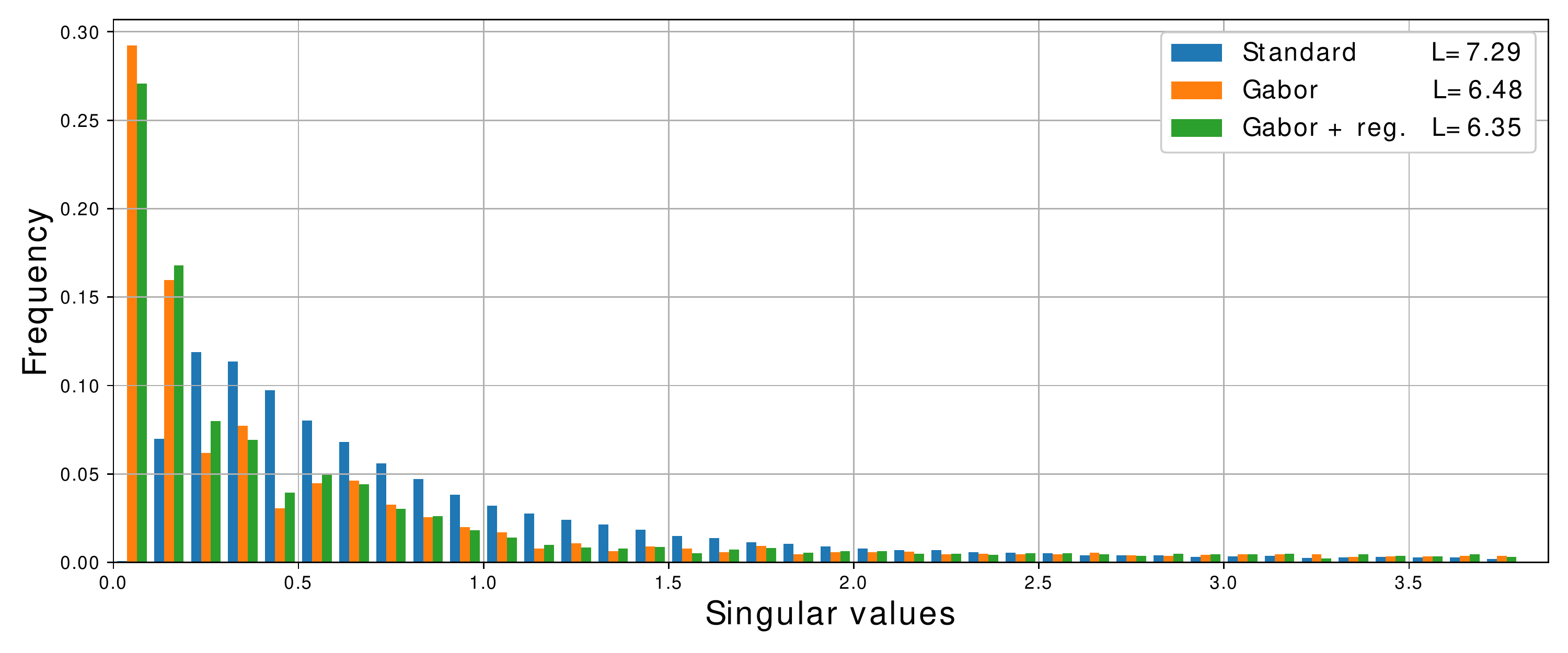}
\caption{\textbf{Distribution of singular values for the third layer of VGG16 trained on CIFAR100.} The legend shows the largest singular value, \ie the Lipschitz constant of the layer.}
\label{fig:sing_vals_cifar100_vgg16_conv3}
\end{figure*}

\begin{figure*}[!h]
\centering
\includegraphics[width=0.85\textwidth]{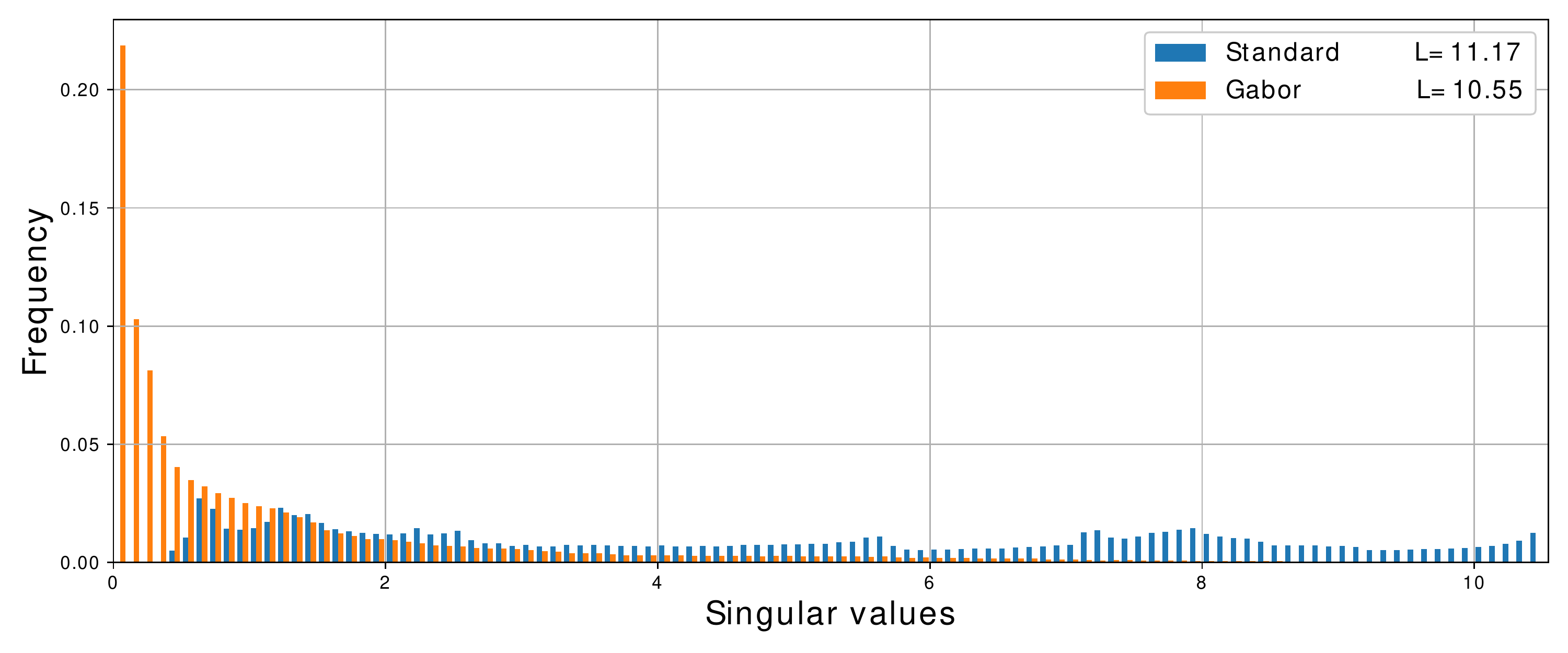}
\caption{\textbf{Distribution of singular values for the first layer of VGG16 trained on ImageNet.} The legend shows the largest singular value, \ie the Lipschitz constant of the layer. Note that the Gabor-layered version was fine-tuned, starting from ImageNet-pretrained weights, due to computational constraints.}
\label{fig:sing_vals_imagenet_vgg16_conv1}
\end{figure*}

\begin{figure*}[!h]
\centering
\includegraphics[width=0.85\textwidth]{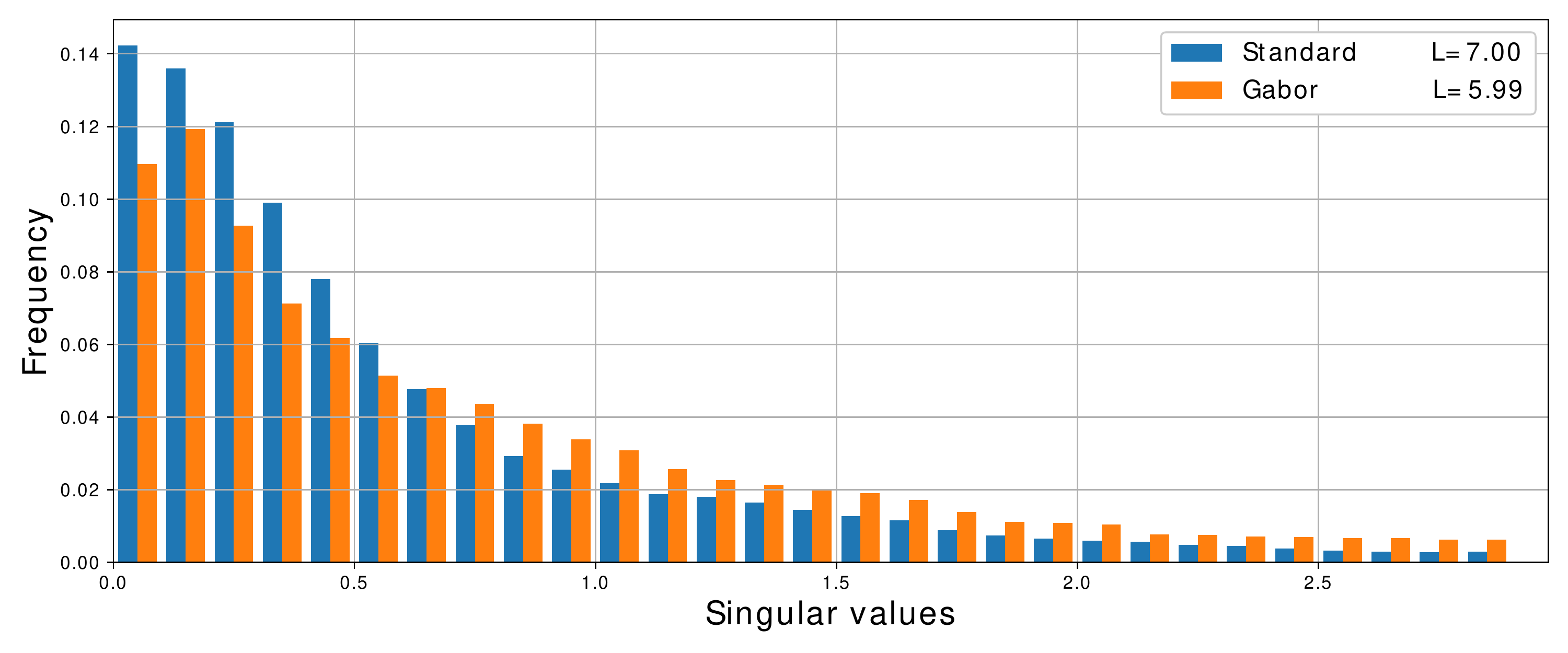}
\caption{\textbf{Distribution of singular values for the second layer of VGG16 trained on ImageNet.} The legend shows the largest singular value, \ie the Lipschitz constant of the layer. Note that the Gabor-layered version was fine-tuned, starting from ImageNet-pretrained weights, due to computational constraints.}
\label{fig:sing_vals_imagenet_vgg16_conv2}
\end{figure*}

\begin{figure*}[!h]
\centering
\includegraphics[width=0.85\textwidth]{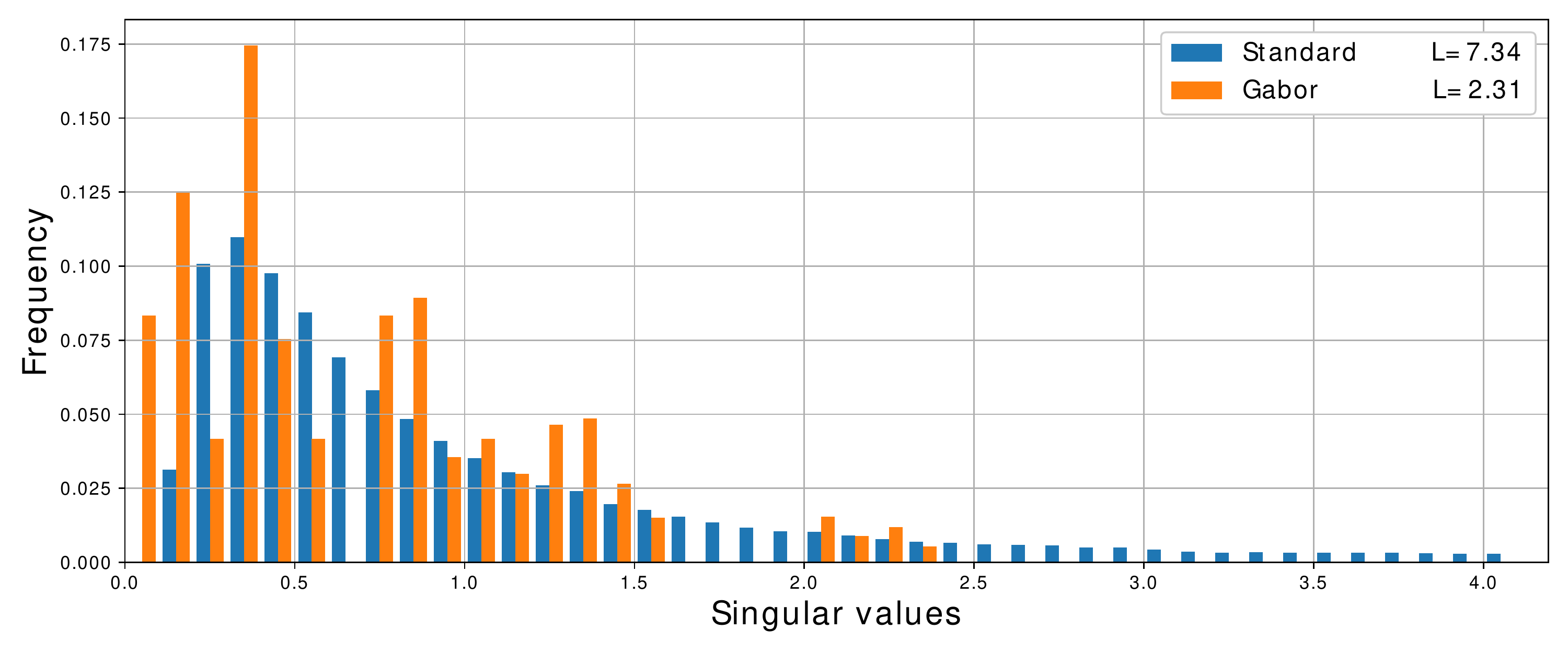}
\caption{\textbf{Distribution of singular values for the third layer of VGG16 trained on ImageNet.} The legend shows the largest singular value, \ie the Lipschitz constant of the layer. Note that the Gabor-layered version was fine-tuned, starting from ImageNet-pretrained weights, due to computational constraints.}
\label{fig:sing_vals_imagenet_vgg16_conv3}
\end{figure*}

\pagebreak

\begin{figure*}[!h]
\centering
\includegraphics[width=0.75\textwidth]{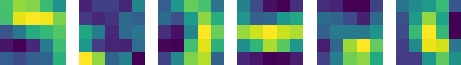}
\caption{\textbf{Standard LeNet-MNIST filters from the first convolutional layer.} The $6$ grayscale filters are visualized.}
\label{fig:lenet_std_kernels}
\end{figure*}

\begin{figure*}[!h]
\centering
\includegraphics[width=0.75\textwidth]{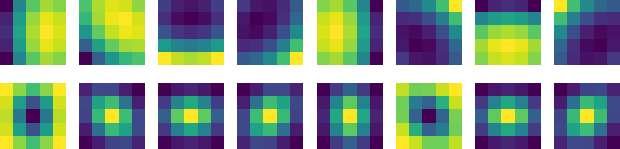}
\caption{\textbf{Gabor-layered LeNet-MNIST filters from the first convolutional layer.} Each of the rows in the figure corresponds to each of the $7$ families of filters of the layer. Each column is one of the $8$ $\alpha$-scaled rotations of the filter. Note that the filters in the second row resemble a Dirac-delta function.}
\label{fig:lenet_gabor_kernels}
\end{figure*}

\begin{figure*}[!h]
\centering
\includegraphics[width=0.75\textwidth]{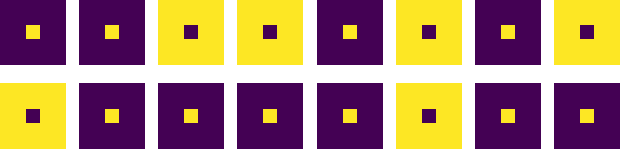}
\caption{\textbf{Gabor-layered LeNet-MNIST filters from the first convolutional layer with regularization.} Each of the rows in the figure corresponds to each of the $7$ families of filters of the layer. Each column is one of the $8$ $\alpha$-scaled rotations of the filter. Note that regularization enforces the filters to be Dirac-deltas and, as reported in the paper, this change results in large gains in robustness.}
\label{fig:lenet_gabor_reg_kernels}
\end{figure*}

\newpage

\begin{figure*}[!h]
\centering
\includegraphics[width=0.95\textwidth]{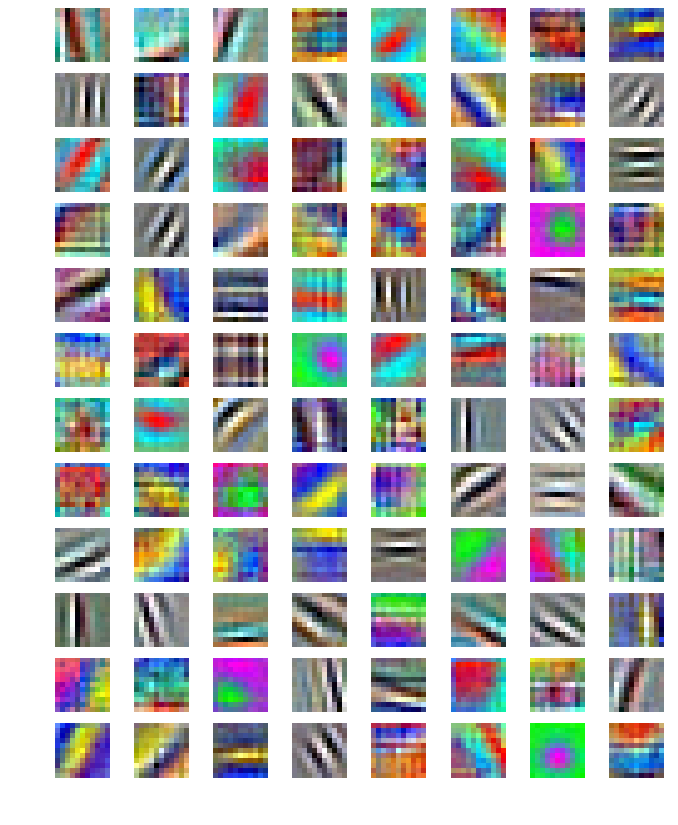}
\caption{\textbf{Standard AlexNet-CIFAR100 filters from the first convolutional layer.} The $96$ filters are visualized by concatenating the RGB channels and mapping values from $0$ to $1$.}
\label{fig:alexnet_std_kernels}
\end{figure*}

\begin{figure*}[!h]
\centering
\includegraphics[width=0.7\textwidth]{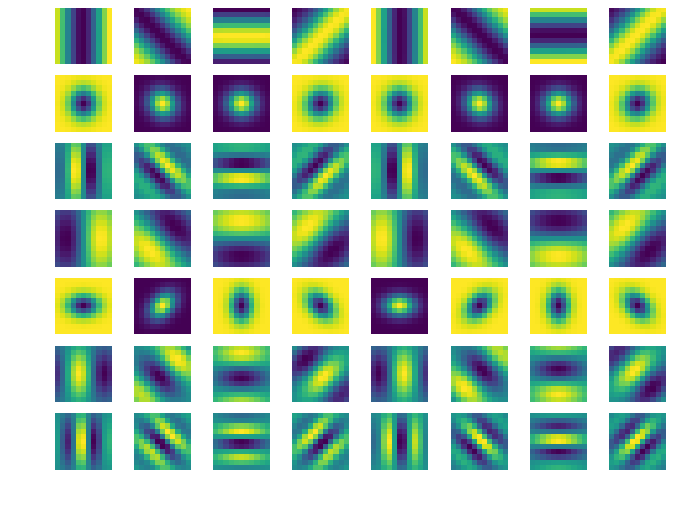}
\caption{\textbf{Gabor-layered AlexNet-CIFAR100 filters from the first convolutional layer.} Each of the rows in the figure corresponds to each of the $7$ families of filters of the layer. Each column is one of the $8$ $\alpha$-scaled rotations of the filter.}
\label{fig:alexnet_gabor_kernels}
\end{figure*}

\begin{figure*}[!h]
\centering
\includegraphics[width=0.7\textwidth]{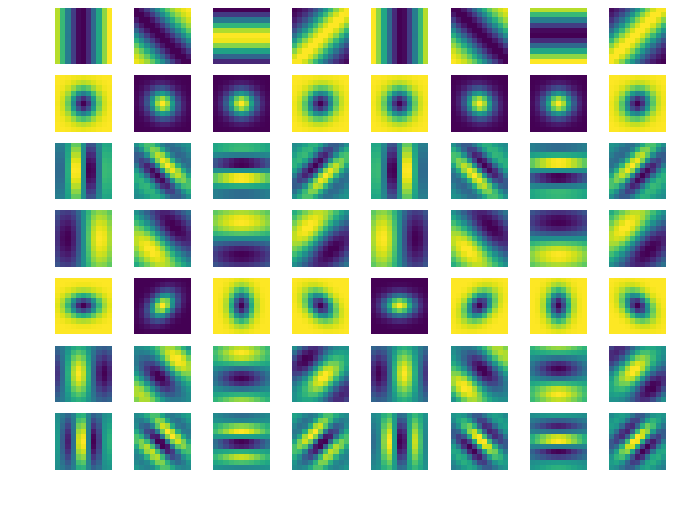}
\caption{\textbf{Gabor-layered AlexNet-CIFAR100 filters from the first convolutional layer with regularization.} Each of the rows in the figure corresponds to each of the $7$ families of filters of the layer. Each column is one of the $8$ $\alpha$-scaled rotations of the filter.}
\label{fig:alexnet_gabor_reg_kernels}
\end{figure*}

\newpage

\begin{figure*}[!h]
\centering
\includegraphics[width=0.7\textwidth]{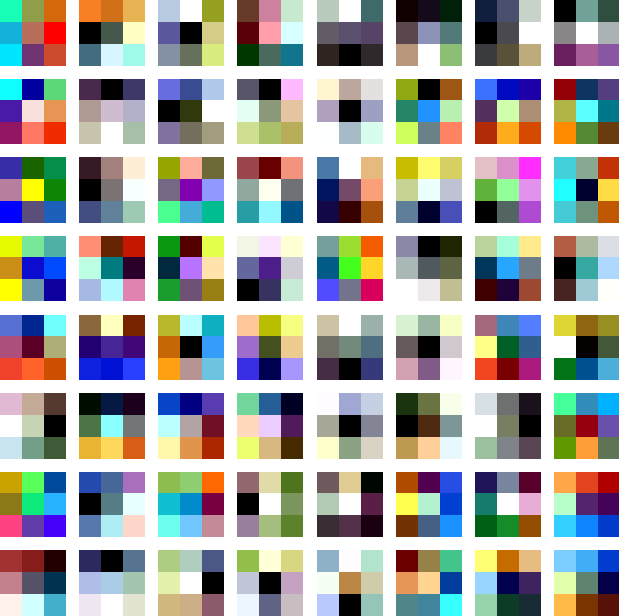}
\caption{\textbf{Standard VGG16-ImageNet filters from the first convolutional layer.} The $64$ filters are visualized by concatenating the RGB channels and mapping values from $0$ to $1$.}
\label{fig:vgg16_imagenet_std_kernels}
\end{figure*}

\begin{figure*}[!h]
\centering
\includegraphics[width=0.45\textwidth]{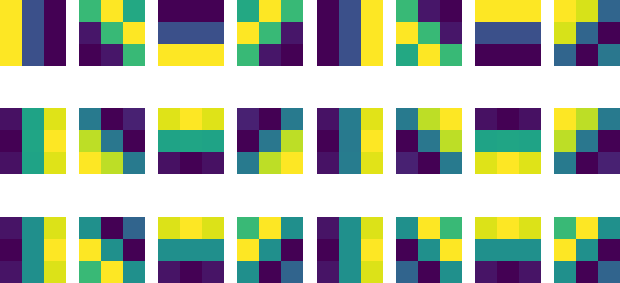}
\caption{\textbf{Gabor-layered VGG16-ImageNet filters from the first convolutional layer without regularization.} Each of the rows in the figure corresponds to each of the $3$ families of filters of the layer. Each column is one of the $8$ $\alpha$-scaled rotations of the filter.}
\label{fig:vgg16_imagenet_gabor_kernels}
\end{figure*}

%
%

\end{document}